\def\expandafter\normalsize\expandafter{%
    \normalsize
}
\icmltitlerunning{Magnitude Matters: Fixing {\scriptsize SIGN}SGD Through Magnitude-Aware Sparsification in the Presence of Data Heterogeneity}
\newtheorem{theorem}{Theorem}
\newtheorem{Definition}{Definition}
\newtheorem{Corollary}{Corollary}
\newtheorem{Lemma}{Lemma}
\newtheorem{Remark}{Remark}
\newtheorem{Assumption}{Assumption}
\begin{document}
\setlength{\abovedisplayskip}{0pt}
\setlength{\belowdisplayskip}{0pt}
\newcommand\sbullet[1][.5]{\mathbin{\vcenter{\hbox{\scalebox{#1}{$\bullet$}}}}}
\twocolumn[
\icmltitle{Magnitude Matters: Fixing {\large SIGN}SGD Through Magnitude-Aware Sparsification in the Presence of Data Heterogeneity}


\icmlsetsymbol{equal}{*}
\begin{icmlauthorlist}
\icmlauthor{Richeng Jin}{zju}
\icmlauthor{Xiaofan He}{Wu}
\icmlauthor{Caijun Zhong}{zju}
\icmlauthor{Zhaoyang Zhang}{zju}
\icmlauthor{Tony Quek}{sutd}
\icmlauthor{Huaiyu Dai}{ncsu}
\end{icmlauthorlist}

\icmlaffiliation{zju}{Zhejiang University, China.}
\icmlaffiliation{Wu}{Wuhan University, China}
\icmlaffiliation{sutd}{Singapore University of Technology and Design, Singapore.}
\icmlaffiliation{ncsu}{North Carolina State University, USA.}
\icmlcorrespondingauthor{Richeng Jin}{richengjin@zju.edu.cn}

\icmlkeywords{Machine Learning, ICML}

\vskip 0.3in
]


\printAffiliationsAndNotice{}  

\begin{abstract}
Communication overhead has become one of the major bottlenecks in the distributed training of deep neural networks. To alleviate the concern, various gradient compression methods have been proposed, and sign-based algorithms are of surging interest. However, {\scriptsize SIGN}SGD fails to converge in the presence of data heterogeneity, which is commonly observed in the emerging federated learning (FL) paradigm. Error feedback has been proposed to address the non-convergence issue. Nonetheless, it requires the workers to locally keep track of the compression errors, which renders it not suitable for FL since the workers may not participate in the training throughout the learning process. In this paper, we propose a magnitude-driven sparsification scheme, which addresses the non-convergence issue of {\scriptsize SIGN}SGD while further improving communication efficiency. Moreover, the local update scheme is further incorporated to improve the learning performance, and the convergence of the proposed method is established. The effectiveness of the proposed scheme is validated through experiments on Fashion-MNIST, CIFAR-10, and CIFAR-100 datasets.

\end{abstract}

\section{Introduction} \label{Introduction}
\noindent The past decades have witnessed tremendous achievements of deep learning in a variety of areas. In practical training of deep neural networks, it is often difficult, if not impossible, to store the whole training datasets in a single machine. Distributed stochastic gradient descent (D-SGD), which leverages the computational power of multiple computing machines, becomes a promising approach \cite{dean2012large,chen2016revisiting}. D-SGD requires intensive communications between the parameter server and the computing nodes for gradient information exchange. As a result, for the training of large deep neural networks, prohibitive communication overhead becomes one of the major bottlenecks.

There are two popular approaches in the literature to alleviate the communication burden: 1) reducing the number of communication rounds by allowing the computing machines to perform multiple local SGD steps before information exchange (e.g., FedAvg \cite{mcmahan2017communication}, local SGD \cite{stich2018local}); 2) reducing the message size through various compression methods, including quantization \citep{alistarh2017qsgd,wen2017terngrad,wang2018atomo}, sparsification \cite{stich2018sparsified, alistarh2018convergence, lin2018deep,sahu2021rethinking} and their combination \cite{basu2019qsparse}.

Among the quantization schemes, {\scriptsize SIGN}SGD is of particular interest due to its communication efficiency and robustness \cite{bernstein2018signsgd1,bernstein2018signsgd2}. In {\scriptsize SIGN}SGD, the computing machines transmit the signs of the gradients and therefore achieve an improvement of $32\times$ in communication efficiency. While {\scriptsize SIGN}SGD converges with the same rate as D-SGD with homogeneous data \cite{bernstein2018signsgd1}, it fails to converge in the presence of data heterogeneity \cite{chen2019distributed}. Considering that heterogeneous data distribution is one of the major features in the emerging federated learning paradigm, there has been a surging interest in addressing the non-convergence issue of {\scriptsize SIGN}SGD. \citep{chen2019distributed} proposed to add unimodal noise to the gradients before taking the sign and proved a convergence rate of $O(d^{3/4}/T^{1/4})$, in which $d$ is the dimension of the gradients and $T$ the total number of communication rounds. \cite{karimireddy2019error,zheng2019communication} addressed the non-convergence issue of {\scriptsize SIGN}SGD by incorporating the error-feedback mechanism, while \cite{safaryan2021stochastic} proposed stochastic sign gradient descent with momentum (SSDM). Nonetheless, both the error-feedback and the momentum mechanisms require the workers to store errors or momentum locally, and therefore may not be directly applicable in FL in which only a portion of the workers participate in the training process during each round.

In this work, we further extend the theory on sign-based gradient descent methods to the ternary case. Instead of the signs, each coordinate of the gradients is mapped to the ternary $\{-1,0,+1\}$. It essentially captures two important mechanisms to further improve communication efficiency: 1) sparsification, and 2) worker sampling. More specifically, each coordinate of the gradient is mapped to $0$ due to either sparsification or the fact that the worker is not sampled{\color{black}/selected} for training. To this end, we first derive a sufficient condition for the convergence of the ternary-based gradient descent method, based on which a magnitude-driven sparsification mechanism is proposed to address the non-convergence issue of {\scriptsize SIGN}SGD. Then, the local update scheme on the worker side and the error-feedback scheme on the server side are incorporated to further improve the learning performance. The contribution of this paper is summarized as follows:

\begin{enumerate}[topsep=0pt,parsep=0pt,partopsep=0pt]
    \item We extend the convergence analysis for {\color{black}sign-based gradient descent methods} to ternary-based gradient descent in the presence of data heterogeneity, which takes two commonly adopted mechanisms in FL into consideration: sparsification and worker sampling.
    \item Based on the analyses, a magnitude-driven sparsification mechanism is proposed to further improve the communication efficiency of {\scriptsize SIGN}SGD while addressing its non-convergence issue.
    \item We further incorporate the local update scheme at the worker side and the error-feedback mechanism at the parameter server side to improve the learning performance. The convergence of the proposed method is established, and the effectiveness of the proposed method is validated by experiments. 
\end{enumerate}

\section{Related Works}
\noindent 

\textbf{Quantization:} QSGD \citep{alistarh2017qsgd}, TernGrad \citep{wen2017terngrad}, ATOMO \cite{wang2018atomo}, DIANA \cite{mishchenko2019distributed}, FedPAQ \citep{pmlr-v108-reisizadeh20a} and FedCOM \citep{haddadpour2021federated} proposed to use unbiased stochastic quantization schemes. Utilizing the unbiased nature of the adopted quantization methods, the convergence of the corresponding algorithms can be established. \cite{shlezinger2020uveqfed} adopted vector quantization. \cite{horvath2019stochastic} extended DIANA to a variance reduced variant, and \cite{li2020acceleration} proposed an accelerated version of DIANA.

{\color{black}\textbf{Sign-based Methods:} Among all the quantization methods, the sign-based gradient descent method pushes the compression to the limit by representing each coordinate in 1 bit and is therefore of particular research interest.} The idea of sharing the signs of gradients in SGD can be traced back to 1-bit SGD \cite{seide20141}. Although sign-based quantization is biased, \cite{carlson2015stochastic} and \cite{bernstein2018signsgd1,bernstein2018signsgd2} showed theoretical and empirical evidence that sign-based gradient descent schemes achieve comparable performance with the vanilla SGD in the homogeneous data distribution scenario. {\color{black}\cite{safaryan2021stochastic} showed the convergence of {\scriptsize SIGN}SGD given the assumption that the element-wise probability of wrong aggregation is less than $\frac{1}{2}$.} In the presence of data heterogeneity, \cite{chen2019distributed} showed that the convergence of {\scriptsize SIGN}SGD is not guaranteed and proposed to add carefully designed noise to ensure a convergence rate of $O(d^{\frac{3}{4}}/T^{\frac{1}{4}})${\color{black}, while \cite{jin2020stochastic} further incorporated differential privacy}. In this work, we address the non-convergence issue of {\scriptsize SIGN}SGD with heterogeneous data by incorporating sparsification, which further improves communication efficiency.


\textbf{Sparsification:} The sparsification methods only keep a subset of the components of the gradients, resulting in a sparse representation of the original gradients. {\color{black}The Random-$k$ scheme} sparsifies the gradient vector by choosing $k$ components uniformly at random \cite{stich2018sparsified}, while {\color{black}the Top-$k$ scheme} selects $k$ components with the largest magnitudes \cite{alistarh2018convergence}. Similarly, {\color{black}the Threshold-$v$ scheme} selects the components whose absolute values are larger than a given threshold $v$ \cite{lin2018deep,sahu2021rethinking}. \cite{rothchild2020fetchsgd} incorporated sketching, which approximates the {\color{black}top-$k$} components of the gradients, into the FL framework.  \cite{wangni2018gradient} proposed an adaptive sparsification method to maximize the sparsity given a constraint on the induced variance. \cite{sattler2019robust,sattler2019sparse} proposed sparse ternary compression (STC), which combines Top-$k$ sparsification and binarization. {\color{black}However, error feedback is usually needed on the worker side to compensate for the bias introduced by sparsification.}


\textbf{Error-Compensated SGD:} \cite{seide20141} corrected the quantization error of 1-bit SGD by adding error feedback in subsequent updates and empirically achieved almost no accuracy loss. \cite{wu2018error} proposed the error-compensated SGD and proved its convergence for quadratic optimization with unbiased stochastic quantizations. \cite{stich2018sparsified} proved the convergence of the proposed error-compensated algorithm for strongly-convex loss functions, and \cite{alistarh2018convergence} established the convergence of sparsified gradient methods with error compensation. \cite{tang2019doublesqueeze} considered more general compressors with a bounded magnitude of error, with error feedback on both the server side and the worker side. \cite{liu2020double} proposed DORE, which compressed the gradient residual. \cite{basu2019qsparse} combined Top-$k$ and Random-$k$ sparsification with quantization and local update, along with error feedback. \cite{gao2021convergence} further considered bidirectional compression. \cite{philippenko2020bidirectional} proposed Artemis and provided a theoretical framework for bidirectional compression with heterogeneous data and partial worker participation, without considering the local update scheme. Similar to the error-feedback methods, they required the workers to keep track of the gradients.


{\color{black}To address the non-convergence issue of} sign-based methods, \cite{karimireddy2019error} proposed {\scriptsize EF\text{-}SIGN}SGD, which combined the error compensation methods and {\scriptsize SIGN}SGD; however, only the single worker scenario was considered. \cite{zheng2019communication} further extended it to the multi-worker scenario and the convergence was established. {\color{black}\cite{safaryan2021stochastic} incorporated the momentum mechanism on the worker side. Nonetheless, both the error-feedback scheme and the momentum mechanism require all the workers to participate in training, which may not be feasible in practice. In this work, the proposed methods do not require error feedback on the worker side, and therefore are compatible with the worker sampling mechanism in FL.} \cite{horvath2021better} introduced a new compressor to transform contractive compressors into unbiased ones, and therefore alleviated the need for the error-feedback mechanism for biased compressors. However, it cannot be applied to address the issue of {\scriptsize SIGN}SGD without incurring additional communication overhead.
\begin{algorithm}[th!]
\caption{Distributed Learning with Compressed Gradient and Worker Sampling}
\label{QuantizedSIGNSGD}
\begin{algorithmic}
\STATE \textbf{Input}: learning rate $\eta$, current hypothesis vector $w^{(t)}$, $M$ workers each with a local dataset $D_{m}$, the compressor $Q(\cdot)$ for workers, the compression budget $\boldsymbol{B}^{(t)}_{m}$ for each worker $m$, the aggregation rule $\mathcal{C}(\cdot)$ for the server.
\FOR{each communication round $t = 0,1,...,T-1$}
\STATE The server selects a random set of workers $S^{(t)}$
\FOR{Each worker $m \in S^{(t)}$}
\STATE Computes the local gradient $\boldsymbol{g}_{m}^{(t)}$
\STATE Sends $\Delta_{m}^{(t)} = Q(\boldsymbol{g}_{m}^{(t)},\boldsymbol{B}^{(t)}_{m})$ back to the server
\ENDFOR
\STATE The server pushes $\tilde{\boldsymbol{g}}^{(t)} = \mathcal{C}\big(\frac{1}{|S^{(t)}|}\sum_{m \in S^{(t)}}\Delta_{m}^{(t)})$ \textbf{to} all the workers.
\STATE All the workers update $w^{(t+1)} = w^{(t)} - \eta\tilde{\boldsymbol{g}}^{(t)}$
\ENDFOR
\end{algorithmic}
\end{algorithm}
\vspace{-0.2in}
\section{Problem Setup}
We consider a typical distributed learning paradigm, in which $M$ workers (denoted as $\mathcal{M}$) collaboratively learn a global model. Formally, the goal of the workers is to  solve the optimization problem of the form
\begin{equation}
\min_{w\in \mathbb{R}^d}F(w) \overset{\mathrm{def}}{=} \frac{1}{M}\sum_{m=1}^{M}f_{m}(w),
\end{equation}
in which $f_{m}:\mathbb{R}^{d}\rightarrow\mathbb{R}$ is the local objective function of worker $m$, which is defined by its local dataset, and $w$ is the model parameters to be optimized. For a supervised learning problem, there is a sample space $I = X \times Y$, where $X$ is a space of feature vectors and $Y$ is a label space. Given the hypothesis space $\mathcal{W}$ parameterized by $d$ dimensional vectors, we define a loss function $l: \mathcal{W}\times I \rightarrow \mathbb{R}$ that measures the loss of prediction on the data point $(x,y) \in I$ made with the hypothesis vector $w \in \mathcal{W}$. In this case, $f_{m}(w)$ is defined as the empirical risk below
\begin{equation}
f_{m}(w)=\frac{1}{|D_{m}|}\sum_{(x_n,y_n)\in D_{m}}l(w;(x_n,y_n)),
\end{equation}
where $|D_{m}|$ is the size of worker $m$'s local dataset $D_{m}$. When the local datasets are homogeneously distributed, we have $\mathbb{E}[f_{m}(w)]=F(w)$, where the expectation is over the training data distribution. In this work, we do not make any assumptions on the training data distribution, i.e., \textit{the training data can be arbitrarily distributed across the workers}.

We consider a parameter server paradigm as in federated learning \cite{mcmahan2017communication}, in which a parameter server coordinates the collaborative training of the $M$ workers. During each communication round $t$, the parameter server selects a subset of workers to perform local model training. Each selected worker samples a batch of training examples, based on which it computes the stochastic gradient $\boldsymbol{g}_{m}^{(t)}$. {\color{black}In this paper, gradient compression is adopted to improve communication efficiency. More specifically, instead of directly sharing the gradient $\boldsymbol{g}_{m}^{(t)}$, the workers share a compressed version $Q(\boldsymbol{g}_{m}^{(t)},\boldsymbol{B}_{m}^{(t)})$ with the parameter server, in which $Q(\cdot)$ is the compressor and $\boldsymbol{B}_{m}^{(t)}$ is the parameter that specifies the compression budget (discussed in detail in Section \ref{ternarysection}). After receiving the gradients from the selected workers, the parameter server aggregates the collected gradients with some aggregation rule $\mathcal{C}(\cdot)$ and sends the aggregated gradient back to all the workers. Finally, the workers update their local model parameters using the aggregated gradient. The corresponding algorithm is presented in Algorithm \ref{QuantizedSIGNSGD}.}

For {\scriptsize SIGN}SGD with majority vote \cite{bernstein2018signsgd1}, $Q(\cdot)=\mathcal{C}(\cdot)=sign(\cdot)$ and $S^{(t)} =\mathcal{M}$. Intuitively, when the local training data are homogeneous across the workers, the gradients of the workers follow the same distribution. Therefore, the workers are supposed to share the same magnitude, which can be discarded without ruining the convergence guarantees. In the heterogeneous data distribution scenario, however, the magnitude information captures the data heterogeneity, which cannot be discarded.

\section{The Convergence of Ternary-based Gradient Descent Methods}\label{ternarysection}
\noindent In this section, we extend the convergence analysis for sign-based gradient descent to the ternary-based gradient descent methods. More specifically, in Algorithm \ref{QuantizedSIGNSGD}, $Q(\boldsymbol{g}_{m,i}^{(t)},\boldsymbol{B}^{(t)}_{m}) \in \{-1,0,1\}$ and $\mathcal{C}(\cdot) = sign(\cdot)$. Compared to $sign(\cdot)$, $Q(\boldsymbol{g}_{m,i}^{(t)},\boldsymbol{B}^{(t)}_{m})$ further captures two scenarios: 1) gradient sparsification and 2) worker sampling, in which $Q(\boldsymbol{g}_{m,i}^{(t)},\boldsymbol{B}^{(t)}_{m}) = 0$ if the $i$-th coordinate of worker $m$'s gradient is zeroed out or worker $m$ is not selected. For ternary-based methods, we have the following theorem.

\begin{theorem}\label{Theorem1}
Let $u_{1},u_{2},\cdots,u_{M}$ be $M$ known and fixed real numbers with $\frac{1}{M}\sum_{m=1}^{M}u_{m} \neq 0$, and consider random variables $\hat{u}_{m} \in \{-1,0,1\}$ {\color{black}(which is the compressed version of $u_{m}$)}, $1\leq m \leq M$. Denote $P(\hat{u}_{m} = -sign(\frac{1}{M}\sum_{m=1}^{M}u_{m}))=p_{m}$, $P(\hat{u}_{m} = sign(\frac{1}{M}\sum_{m=1}^{M}u_{m}))=q_{m}$ and $P(\hat{u}_{m} = 0)=1-p_{m}-q_{m}$. Let $\Bar{p} = \frac{1}{M}\sum_{m=1}^{M}p_{m}$ and $\Bar{q} = \frac{1}{M}\sum_{m=1}^{M}q_{m}$. If $\Bar{q} > \Bar{p}$, {\color{black}the probability of wrong aggregation is given by}
\begin{equation}\label{ProbabilityOfError}
\begin{split}
P\bigg(sign\bigg(\frac{1}{M}\sum_{m=1}^{M}\hat{u}_{m}\bigg)&\neq sign\bigg(\frac{1}{M}\sum_{m=1}^{M}u_{m}\bigg)\bigg) \\
&\leq [1-(\sqrt{\Bar{q}}-\sqrt{\Bar{p}})^2]^{M}.
\end{split}
\end{equation}
\end{theorem}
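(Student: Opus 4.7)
My plan is to reduce the claim to a Chernoff-style tail bound on a sum of independent random variables and then sharpen the per-coordinate moment generating function using AM-GM and an optimal tilt parameter. By symmetry I may assume $s_\star := \mathrm{sign}(\frac{1}{M}\sum_{m} u_m) = +1$; the other case follows by negation. Introduce the indicators $X_m = \mathbf{1}[\hat{u}_m = -1]$ and $Y_m = \mathbf{1}[\hat{u}_m = +1]$, so that $P(X_m=1)=p_m$, $P(Y_m=1)=q_m$, and the $\hat u_m$ are independent across $m$. The event of wrong aggregation becomes
\begin{equation*}
\Bigl\{ \tfrac{1}{M}\sum_{m=1}^M \hat u_m \le 0 \Bigr\} = \Bigl\{ \sum_{m=1}^M (X_m - Y_m) \ge 0 \Bigr\},
\end{equation*}
where I absorb the tie (zero sum) into the wrong-aggregation event, which only weakens the bound.

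Next I would apply Markov's inequality to $\exp\bigl(s\sum_m(X_m-Y_m)\bigr)$ for an arbitrary $s > 0$, and use independence to factorize:
\begin{equation*}
P\Bigl(\sum_m (X_m - Y_m) \ge 0\Bigr) \le \prod_{m=1}^M \bigl( p_m e^{s} + q_m e^{-s} + (1 - p_m - q_m) \bigr).
\end{equation*}
Now comes the key step: apply the AM-GM inequality to the $M$ non-negative factors to bound the product by the $M$-th power of its arithmetic mean, yielding
\begin{equation*}
\prod_{m=1}^M (p_m e^{s} + q_m e^{-s} + 1 - p_m - q_m) \le \bigl( \bar p\, e^{s} + \bar q\, e^{-s} + 1 - \bar p - \bar q \bigr)^M.
\end{equation*}
Finally, I would minimize over $s>0$. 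Elementary calculus gives the optimum at $e^{s} = \sqrt{\bar q/\bar p}$, which is admissible because the assumption $\bar q > \bar p$ forces $s>0$ (with the boundary case $\bar p = 0$ handled by letting $s\to\infty$, or directly by the bound $\prod_m(1-q_m)\le(1-\bar q)^M$). Substituting yields $\bar p e^{s} + \bar q e^{-s} = 2\sqrt{\bar p\bar q}$, so the bracketed quantity collapses to $1 - (\sqrt{\bar q} - \sqrt{\bar p})^2$, matching \eqref{ProbabilityOfError}.

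The main obstacle is conceptual rather than computational: one has to notice that the AM-GM step is precisely what converts a per-coordinate bound into one depending only on the averaged parameters $\bar p, \bar q$, and that the subsequent Chernoff optimization produces the Bhattacharyya-coefficient-like quantity $(\sqrt{\bar q}-\sqrt{\bar p})^2$. A secondary subtlety is verifying that the tilt $s = \tfrac{1}{2}\ln(\bar q/\bar p)$ is positive exactly under the hypothesis $\bar q > \bar p$, so that the Markov step is valid, and that the boundary cases $\bar p = 0$ or $\bar q = 1$ do not require a separate argument beyond a limiting interpretation.
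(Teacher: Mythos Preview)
Your proposal is correct and follows essentially the same route as the paper: a Chernoff/Markov bound on $\sum_m \hat u_m$, factorization by independence, reduction to the averaged parameters $(\bar p,\bar q)$, and optimization at $e^{s}=\sqrt{\bar q/\bar p}$. The only cosmetic difference is that you invoke AM--GM on the $M$ factors while the paper phrases the identical step as Jensen's inequality applied to $\ln(\cdot)$; these are the same inequality, so the two arguments coincide.
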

\begin{Remark}
Theorem \ref{Theorem1} considers the scalar case, which can be readily generalized to the vector case by applying the result to each coordinate independently. More specifically, $sign(\frac{1}{M}\sum_{m=1}^{M}u_{m})$ can be understood as the sign of the global gradient that we want to recover, $\hat{u}_{m}$ is the compressed gradient shared by worker $m$, and $sign(\frac{1}{M}\sum_{m=1}^{M}\hat{u}_{m})$ is the aggregated result. Theorem \ref{Theorem1} suggests that when the average probability of wrong signs (i.e., $\bar{p}$) is less than that of correct signs (i.e., $\bar{q}$), there exists some $M$ such that the probability of wrong aggregation is less than $\frac{1}{2}$, based on which the convergence can be established (c.f. Theorem \ref{convergerate}). {\color{black}Intuitively, suppose that $sign(\frac{1}{M}\sum_{m=1}^{M}u_{m})=1$, $\sum_{m=1}^{M}\hat{u}_{m}$ converges in distribution to a Gaussian random variable with mean $M(\Bar{q}-\Bar{p}) > 0$ according to the Lyapunov central limit theorem. Therefore, the probability of wrong aggregation is strictly smaller than $\frac{1}{2}$. A similar argument holds for $sign(\frac{1}{M}\sum_{m=1}^{M}u_{m})=-1$.}

For {\scriptsize SIGN}SGD with large batch size and homogeneous data distribution, $\sqrt{\bar{q}}-\sqrt{\bar{p}}$ approaches $1$, and therefore the probability of wrong aggregation is small. Without any assumptions on the data distribution, however, $\bar{q} > \bar{p}$ is not guaranteed for the deterministic $sign(\cdot)$.
\end{Remark}

Given Theorem \ref{Theorem1}, we endeavor to propose a sparsification scheme such that $\Bar{q} > \Bar{p}$ for an \textit{arbitrary} realization of $u_{m}$'s, which is defined as follows.
\begin{Definition}\label{definitioncompressor}[\textbf{The Proposed Compressor}]
Given a gradient vector $\boldsymbol{g}_{m}$, the proposed compressor outputs
\begin{equation}\label{DefinitionofSparsification3}
sparsign(\boldsymbol{g}_{m,i}^{(t)},\boldsymbol{B}^{(t)}_{m}) =
\begin{cases}
sign(\boldsymbol{g}_{m,i}^{(t)}), \hfill ~~~\text{w.p. $|\boldsymbol{g}_{m,i}^{(t)}|\boldsymbol{B}^{(t)}_{m,i}$}, \\
0, \hfill ~~~\text{w.p. $1-|\boldsymbol{g}_{m,i}^{(t)}|\boldsymbol{B}^{(t)}_{m,i}$},
\end{cases}
\end{equation}
in which $\boldsymbol{g}_{m,i}^{(t)}$ is the $i$-th entry of $\boldsymbol{g}_{m}$ and $\boldsymbol{B}^{(t)}_{m,i} \leq \frac{1}{|\boldsymbol{g}_{m,i}^{(t)}|}$ controls the sparsity budget. More specifically, the expected number of non-zero entries is given by $\sum_{i=1}^{d}|\boldsymbol{g}_{m,i}^{(t)}|\boldsymbol{B}^{(t)}_{m,i}$.
\end{Definition}

\begin{Remark}
\textbf{Relation to \textit{TernGrad} \cite{wen2017terngrad} and \textit{1-bit QSGD} \cite{alistarh2017qsgd}:} the compressors in \textit{TernGrad} and 1-bit QSGD can be understood as scaled versions of $sparsign(\cdot)$ with $\boldsymbol{B}^{(t)}_{m,i} = \frac{1}{\max_{m}||\boldsymbol{g}_{m}||_{\infty}}$ and $\boldsymbol{B}^{(t)}_{m,i} = \frac{1}{||\boldsymbol{g}_{m}||_{2}}$, $\forall i$, respectively. In addition, the output of the compressors in \textit{TernGrad} and \textit{1-bit QSGD} are further scaled by the factors $\max_{m}||\boldsymbol{g}_{m}||_{\infty}$ and $||\boldsymbol{g}_{m}||_{2}$, respectively, to preserve the unbiasedness. Algorithm \ref{QuantizedSIGNSGD} is different from \textit{TernGrad} and \textit{1-bit QSGD} in four aspects:  1) The aggregated gradient $\tilde{\boldsymbol{g}}^{(t)}$ in Algorithm \ref{QuantizedSIGNSGD} is a biased estimate of the true gradient. As a result, the convergence analysis of Algorithm \ref{QuantizedSIGNSGD} is completely different from that of \textit{TernGrad} and \textit{1-bit QSGD}. 2) It enjoys flexibility in adjusting the compression ratio by changing $\boldsymbol{B}^{(t)}_{m,i}$'s. 3) It enables compressed communication from the server to the workers. {\color{black}More specifically, for majority vote (i.e., $\mathcal{C}(\cdot) = sign(\cdot)$), there is a reduction of $32\times$ in communication overhead.} 4) It does not require the exchange of $||\boldsymbol{g}_{m}||_{\infty}$ and $||\boldsymbol{g}_{m}||_{2}$, and therefore, is robust against re-scaling attacks that manipulate the magnitudes \cite{jin2020stochastic}. 
\end{Remark}

\begin{Corollary}\label{Corollary1}
Given the same $u_{1},u_{2},\cdots,u_{M}$ as in Theorem \ref{Theorem1} and consider random variables $\hat{u}_{m}=sparsign(u_{m},B_{m})$, $1\leq m \leq M$. Let $\mathcal{A}$ and $\mathcal{A}^{c}$ denote the sets of workers such that $u_{m} \neq sign(\frac{1}{M}\sum_{m=1}^{M}u_{m}), \forall m \in \mathcal{A}$ and $u_{m} = sign(\frac{1}{M}\sum_{m=1}^{M}u_{m}), \forall m \in \mathcal{A}^{c}$, respectively. Let $S$ denote the set of selected workers, then $\Bar{p} = \frac{1}{M}\sum_{m \in \mathcal{A}}|u_{m}B_{m}P(m \in S)|$ and $\Bar{q} = \frac{1}{M}\sum_{m \in \mathcal{A}^{c}}|u_{m}B_{m}P(m \in S)|$ in (\ref{ProbabilityOfError}).
\end{Corollary}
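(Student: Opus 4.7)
The plan is to compute $p_m$ and $q_m$ directly from the definition of $sparsign(\cdot)$ and the worker sampling mechanism, and then average over $m$ to obtain $\bar p$ and $\bar q$. The key observation is that the compressor $sparsign(u_m,B_m)$ is a \emph{one-sided} ternary quantizer: it either returns $sign(u_m)$ or $0$, and never returns $-sign(u_m)$. Combined with the fact that $\hat u_m$ is forced to $0$ whenever $m\notin S$, this lets us write the probabilities on $\{-1,0,1\}$ in closed form by conditioning on the independent events ``$m\in S$'' and ``coordinate is kept by the sparsifier.''

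First, for any worker $m$, condition on whether $m\in S$. If $m\notin S$, then $\hat u_m=0$ by construction, contributing nothing to $p_m$ or $q_m$. If $m\in S$, Definition~1 gives $P(\hat u_m=sign(u_m)\mid m\in S)=|u_m|B_m$ and $P(\hat u_m=0\mid m\in S)=1-|u_m|B_m$. Combining via the independence of sampling and sparsification yields $P(\hat u_m=sign(u_m))=|u_m|B_m P(m\in S)$ and $P(\hat u_m=-sign(u_m))=0$.

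Second, I would split the index set based on whether $sign(u_m)$ agrees with $s:=sign\!\bigl(\tfrac{1}{M}\sum_{m}u_{m}\bigr)$. For $m\in\mathcal A$ we have $sign(u_m)=-s$, so the event $\{\hat u_m=sign(u_m)\}$ coincides with $\{\hat u_m=-s\}$; hence $p_m=|u_m|B_m P(m\in S)$ and $q_m=0$. For $m\in\mathcal A^c$ we have $sign(u_m)=s$, so the same event coincides with $\{\hat u_m=s\}$, giving $q_m=|u_m|B_m P(m\in S)$ and $p_m=0$. Note that $B_m\ge 0$ and $P(m\in S)\ge 0$, so the absolute value signs in the statement are cosmetic and do not change the value.

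Finally, averaging over $m=1,\dots,M$ gives
\[
\bar p=\frac{1}{M}\sum_{m=1}^{M}p_m=\frac{1}{M}\sum_{m\in\mathcal A}|u_m B_m P(m\in S)|,
\]
and analogously for $\bar q$ with the sum taken over $\mathcal A^c$, which is exactly the claimed expression. There is no real obstacle here beyond carefully tracking the one-sided nature of $sparsign(\cdot)$ and the independence between worker sampling and coordinate-wise sparsification; the mild subtlety is simply to recognize that $P(\hat u_m=-sign(u_m))=0$, so each worker contributes to at most one of $\bar p$ and $\bar q$ depending on whether it lies in $\mathcal A$ or $\mathcal A^c$.
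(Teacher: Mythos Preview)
Your proposal is correct and follows essentially the same approach as the paper: compute $p_m$ and $q_m$ directly from the definition of $sparsign$ together with worker sampling, observe that the compressor is one-sided so each worker contributes to exactly one of $p_m,q_m$ depending on whether $sign(u_m)$ agrees with $s$, and then average. If anything, your write-up is more explicit than the paper's, which fixes a sign convention implicitly and closes with ``with some algebra.''
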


\begin{Remark}\label{Remark3}
When $B_{m} = B$ and $\Pr(m \in S) = p_{s}, \forall m$, i.e., the workers are selected uniformly at random and share the same $B_{m}$, we have $\Bar{p} = \frac{1}{M}\sum_{m \in \mathcal{A}}|u_{m}|Bp_{s}$, $\Bar{q} = \frac{1}{M}\sum_{m \in \mathcal{A}^{c}}|u_{m}|Bp_{s}$, which ensures $\Bar{q} > \Bar{p}$. That being said, there always exists some $M$ such that the probability of wrong aggregation $P\left(sign\left(\sum_{m \in S^{(t)}}\hat{u}_{m}\right)\neq sign\left(\sum_{m=1}^{M}u_{m}\right)\right) \leq [1-Bp_{s}(\sqrt{\frac{1}{M}\sum_{m \in \mathcal{A}^{c}}|u_{m}|}-\sqrt{\frac{1}{M}\sum_{m \in \mathcal{A}}|u_{m}|})^2]^{M} < \frac{1}{2}$. Moreover, in this case, the right-hand side of (\ref{ProbabilityOfError}) depends on the worker sampling probability $p_s$, i.e., a larger $p_{s}$ suggests a smaller probability of wrong aggregation.
\end{Remark}
Given the above results, in the following, we show the convergence of Algorithm \ref{QuantizedSIGNSGD}. In order to facilitate the convergence analysis, the following commonly adopted assumptions are made.
\begin{Assumption}\label{A1}(Lower Bound).
 For all $w$ and some constant $F^{*}$, we have objective value $F(w) \geq F^{*}$.
\end{Assumption}
\begin{Assumption}\label{A2}(Smoothness).
$\forall w_1,w_2$, we require for some non-negative constant $L$
\begin{equation}
F(w_1) \leq F(w_2) + \langle\nabla F(w_2), w_1-w_2\rangle + \frac{L}{2}||w_1 - w_2||^{2}_2,
\end{equation}
where $\langle\cdot,\cdot\rangle$ is the standard inner product.
\end{Assumption}

\begin{Assumption}\label{A3}(Unbiased Local Estimator)
For any $w \in \mathbb{R}^{d}$, each worker has access to an independent and unbiased estimator $\boldsymbol{g}_{m}$ of the true local gradient $\nabla f_{m}(w)$, i.e., $\mathbb{E}[\boldsymbol{g}_{m}]=\nabla f_{m}(w)$.
\end{Assumption}

\begin{Assumption}\label{A4}(SPB: Success Probability Bounds).
For any $w \in \mathbb{R}^{d}$, the workers have access to estimator $\frac{1}{M}\sum_{m=1}^{M}\boldsymbol{g}_{m}$ of the true {\color{black}global} gradient $\frac{1}{M}\sum_{m=1}^{M}\nabla f_{m}(w)$ such that
\begin{equation}
\begin{split}
P\bigg(sign\left(\frac{1}{M}\sum_{m=1}^{M}\boldsymbol{g}_{m,i}\right) &\neq sign\bigg(\frac{1}{M}\sum_{m=1}^{M}\nabla f_{m}(w)_{i}\bigg)\bigg) \\
&\triangleq \rho_{i} < \frac{1}{2}, \forall i,
\end{split}
\end{equation}
in which $\boldsymbol{g}_{m,i}$ and $\nabla f_{m}(w)_{i}$ are the $i$-th entry of $\boldsymbol{g}_{m}$ and $\nabla f_{m}(w)$, respectively.
\end{Assumption}

We note that Assumptions \ref{A1}-\ref{A3} are commonly adopted in the literature. Assumption \ref{A4} is introduced in \cite{safaryan2021stochastic}, which holds under mild assumptions on gradient noise.
\begin{Remark}
Different from \cite{safaryan2021stochastic} that assumes SPB on each individual worker (and therefore cannot deal with heterogeneous data), Assumption \ref{A4} is a relaxed version that concerns the average of the local gradients from the workers. More specifically, it can be verified that if the training data sampling noise in each coordinate of the stochastic gradient is unimodal and symmetric about the mean (e.g., Gaussian) \cite{bernstein2018signsgd1}, Assumption \ref{A4} holds for an arbitrary realization of $\nabla f_{m}(w)$'s.
\end{Remark}

\begin{theorem}\label{convergerate}
Suppose Assumptions \ref{A1}, \ref{A2} and \ref{A4} are satisfied, and the learning rate is set as $\eta=\frac{1}{\sqrt{Td}}$. Then by running Algorithm \ref{QuantizedSIGNSGD} (termed {\scriptsize SPARSIGN}SGD) with $Q(\boldsymbol{g}_{m}^{(t)},\boldsymbol{B}^{(t)}_{m}) = sparsign(\boldsymbol{g}_{m}^{(t)},\boldsymbol{B}^{(t)}_{m})$, where $\boldsymbol{B}_{m,i}^{(t)} = B_{i}^{(t)}$, $\forall m$, and $\mathcal{C}(\cdot) = sign(\cdot)$ for $T$ iterations, we have
\begin{equation}
\color{black}
\begin{split}
&\frac{1}{T}\sum_{t=1}^{T}\sum_{i=1}^{d}(1-2\rho_{i})(1-2\kappa_{i}^{(t)})|\nabla F(w^{(t)})_{i}| \\
&\leq \frac{\mathbb{E}[F(w^{(0)}) - F(w^{(T+1)})]\sqrt{d}}{\sqrt{T}} + \frac{L\sqrt{d}}{2\sqrt{T}}\\
&\leq\frac{(F(w^{(0)}) - F^{*})\sqrt{d}}{\sqrt{T}} + \frac{L\sqrt{d}}{2\sqrt{T}},
\end{split}
\end{equation}
where
\begin{equation}\nonumber
\begin{split}
&\kappa_{i}^{(t)} = \mathbb{E}\bigg[\bigg[1-B_{i}^{(t)}p_{s}\times\\
&\bigg(\frac{|\frac{1}{M}\sum_{m=1}^{M}\boldsymbol{g}_{m,i}^{(t)}|}{\sqrt{\frac{1}{M}\sum_{m \in \mathcal{A}^{c}_{(t)}}|\boldsymbol{g}_{m,i}^{(t)}|}+\sqrt{\frac{1}{M}\sum_{m \in \mathcal{A}_{(t)}}|\boldsymbol{g}_{m,i}^{(t)}|}}\bigg)^2\bigg]^{M}\bigg],
\end{split}
\end{equation}
$\mathcal{A}_{(t)}$ and $\mathcal{A}^{c}_{(t)}$ are the set of workers such that $sign(\boldsymbol{g}_{m,i}^{(t)}) \neq sign(\frac{1}{M}\sum_{m=1}^{M}\boldsymbol{g}_{m,i}^{(t)})$ and $sign(\boldsymbol{g}_{m,i}^{(t)}) = sign(\frac{1}{M}\sum_{m=1}^{M}\boldsymbol{g}_{m,i}^{(t)})$, respectively, and the expectation is over the randomness of the gradients $\boldsymbol{g}_{m}^{(t)}$.
\end{theorem}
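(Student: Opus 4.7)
The plan is to follow the standard sign-based descent-lemma approach, but with a two-level randomness decomposition that isolates the noise in the stochastic gradient (controlled by Assumption \ref{A4}) from the extra randomness introduced by $sparsign(\cdot)$ and worker sampling (controlled by Corollary \ref{Corollary1}, whose probability bound, averaged over the stochastic gradients, reproduces the $\kappa_i^{(t)}$ appearing in the theorem).

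First, I would apply $L$-smoothness (Assumption \ref{A2}) to the update $w^{(t+1)} = w^{(t)} - \eta\,\tilde{\boldsymbol{g}}^{(t)}$, obtaining
\begin{equation*}
F(w^{(t+1)}) \leq F(w^{(t)}) - \eta\,\langle \nabla F(w^{(t)}), \tilde{\boldsymbol{g}}^{(t)}\rangle + \frac{L\eta^2}{2}\|\tilde{\boldsymbol{g}}^{(t)}\|_2^2.
\end{equation*}
Because $\mathcal{C}(\cdot)=sign(\cdot)$, every coordinate of $\tilde{\boldsymbol{g}}^{(t)}$ lies in $\{-1,0,+1\}$, so $\|\tilde{\boldsymbol{g}}^{(t)}\|_2^2 \leq d$ and the quadratic term is bounded by $L\eta^2 d/2$. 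Writing $s_i := sign(\nabla F(w^{(t)})_i)$ and decomposing the inner product coordinate-wise, the cross term becomes $\sum_i |\nabla F(w^{(t)})_i|\,s_i\,\tilde{\boldsymbol{g}}^{(t)}_i$.

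The heart of the proof is the per-coordinate lower bound
\begin{equation*}
\mathbb{E}\bigl[s_i\,\tilde{\boldsymbol{g}}^{(t)}_i\bigr] \geq (1-2\rho_i)(1-2\kappa_i^{(t)}),
\end{equation*}
which I would establish by conditioning first on the stochastic gradients $\{\boldsymbol{g}_m^{(t)}\}$ and the sampled set $S^{(t)}$, and introducing the intermediate sign $\sigma := sign(\tfrac{1}{M}\sum_m \boldsymbol{g}_{m,i}^{(t)})$. Applying Corollary \ref{Corollary1} with $u_m = \boldsymbol{g}_{m,i}^{(t)}$ bounds the conditional disagreement probability $P(\tilde{\boldsymbol{g}}^{(t)}_i \neq \sigma \mid \boldsymbol{g}^{(t)})$ by $[1-(\sqrt{\bar{q}}-\sqrt{\bar{p}})^2]^M$; algebraically simplifying $(\sqrt{\bar{q}}-\sqrt{\bar{p}})^2$ using $\bar{q}-\bar{p} = B_i^{(t)}p_s \, |\tfrac{1}{M}\sum_m \boldsymbol{g}_{m,i}^{(t)}|$ and taking expectation over $\boldsymbol{g}^{(t)}$ produces exactly $\kappa_i^{(t)}$ as written in the theorem. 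Since $\tilde{\boldsymbol{g}}^{(t)}_i$ may equal $0$, I would use the one-sided estimate $P(\tilde{\boldsymbol{g}}^{(t)}_i=\sigma) - P(\tilde{\boldsymbol{g}}^{(t)}_i=-\sigma) \geq 1 - 2P(\tilde{\boldsymbol{g}}^{(t)}_i\neq \sigma)$ to deduce $\mathbb{E}[\sigma\,\tilde{\boldsymbol{g}}^{(t)}_i] \geq 1 - 2\kappa_i^{(t)}$. Finally, splitting on whether $\sigma = s_i$ (probability $\geq 1-\rho_i$ by Assumption \ref{A4}) versus $\sigma = -s_i$ (probability $\leq \rho_i$) and simplifying $(1-\rho_i)(1-2\kappa_i^{(t)}) - \rho_i(1-2\kappa_i^{(t)}) = (1-2\rho_i)(1-2\kappa_i^{(t)})$ gives the claim.

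To close, I would take full expectation of the one-step inequality, substitute the per-coordinate bound, telescope from $t=0$ to $T$, divide by $T\eta$, plug in $\eta = 1/\sqrt{Td}$, and invoke Assumption \ref{A1} in the form $F(w^{(T+1)}) \geq F^*$. The main obstacle is the per-coordinate step: matching Corollary \ref{Corollary1}'s bound (originally written in terms of $\bar{p},\bar{q}$) to the exact form of $\kappa_i^{(t)}$ in the theorem, and correctly absorbing the event $\tilde{\boldsymbol{g}}^{(t)}_i=0$ through a one-sided (not two-sided) bound on the disagreement probability so that the ``sign flip'' and the ``zeroed-out'' outcomes are controlled simultaneously. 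The remaining telescoping and arithmetic with the chosen learning rate is routine.
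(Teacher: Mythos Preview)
Your proposal is essentially the paper's own argument. The paper likewise starts from $L$-smoothness, bounds $\|\tilde{\boldsymbol{g}}^{(t)}\|_2^2$ by $d$, rewrites the cross term coordinate-wise (phrasing it as $-\|\nabla F(w^{(t)})\|_1 + 2\sum_i|\nabla F(w^{(t)})_i|\,\mathds{1}_{\tilde{\boldsymbol{g}}^{(t)}_i\neq s_i}$ rather than $\sum_i|\nabla F(w^{(t)})_i|\,s_i\tilde{\boldsymbol{g}}_i^{(t)}$, which is equivalent after taking expectation), decomposes the failure probability through the intermediate sign $\sigma = sign(\tfrac{1}{M}\sum_m\boldsymbol{g}_{m,i}^{(t)})$ into the same $\rho_i$- and $\kappa_i^{(t)}$-factors via Assumption~\ref{A4} and Corollary~\ref{Corollary1}, and closes by telescoping and substituting $\eta=1/\sqrt{Td}$.
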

\begin{Remark}
As long as $P(|\frac{1}{M}\sum_{m=1}^{M}\boldsymbol{g}_{m,i}^{(t)}|=0) < 1$, there exists some $M$ such that $\kappa_{i}^{(t)} < \frac{1}{2}$. In addition, it can be observed that, if $B_{i}^{(t)}$ and $p_{s}$ are independent of $M$, $\lim_{m \rightarrow \infty}\kappa_{i}^{(t)} = 0$, which recovers the convergence rate of {\scriptsize SIGN}SGD with majority {\color{black}vote} in the centralized scenario \cite{safaryan2021stochastic}. That being said, the impact of data heterogeneity is mitigated in the large $M$ regime. In the ideal case where all the workers share the same gradients, we can set $B_{i}^{(t)} = |\boldsymbol{g}_{m,i}^{(t)}|, \forall i$. As a result, we have $\kappa_{i}^{(t)} = 0$, $\forall i$. Moreover, it can be observed that in the full gradient descent case (i.e., $\boldsymbol{g}_{m}^{(t)} = \nabla f_{m}(w^{(t)})$), as $\frac{1}{M}\sum_{m=1}^{M}\nabla f_{m}(w^{(t)})_{i}$ decreases, $\kappa_{i}^{(t)}$ approaches 1. In this case, Algorithm \ref{QuantizedSIGNSGD} does not converge to the (local) optimum. Nonetheless, for SGD, we have $P(|\frac{1}{M}\sum_{m=1}^{M}\boldsymbol{g}_{m,i}^{(t)}|=0) < 1$, and therefore $\kappa_{i}^{(t)}$ does not approach 1. This suggests that the stochasticity in SGD is beneficial.
\end{Remark}


\begin{algorithm}[th!]
\caption{{\scriptsize EF-SPARSIGN}SGD with Local Updates}
\label{QuantizedSIGNSGDLocal}
\begin{algorithmic}
\STATE \textbf{Input}: learning rate $\eta$, current hypothesis vector $w^{(t)}$, $M$ workers each with a local dataset $D_{m}$, the compression budgets $\boldsymbol{B}_{m}^{(t,c)}$ and $\boldsymbol{B}^{(t)}_{m}$ for each worker $m$, the aggregation rule $\mathcal{C}(\cdot)$ for the server, residual error vector $\tilde{\boldsymbol{e}}^{(t)}$.
\FOR{each communication round $t = 0,1,...,T-1$}
\STATE Server selects a random set of workers $S^{(t)}$
\FOR{Each worker $m \in S^{(t)}$}
\STATE Set $w^{(t, 0)}_{m} = w^{(t)}$
\FOR{Each local step $c=0,1,...,\tau-1$}
\STATE $w^{(t, c+1)}_{m} = w^{(t, c)}_{m} - \eta_{L}Q(\boldsymbol{g}_{m}^{(t,c)},\boldsymbol{B}_{m}^{(t,c)})$
\ENDFOR
\STATE Sends $\Delta_{m}^{(t)} = Q(\sum_{c=0}^{\tau-1}Q(\boldsymbol{g}_{m}^{(t,c)},\boldsymbol{B}_{m}^{(t,c)}),\boldsymbol{B}^{(t)}_{m})$ to the server
\ENDFOR
\STATE Server pushes $\tilde{\boldsymbol{g}}^{(t)} = \mathcal{C}\big(\frac{1}{|S^{(t)}|}\sum_{m \in S^{(t)}}\Delta_{m}^{(t)}+\tilde{\boldsymbol{e}}^{(t)}\big)$ \textbf{to} all the workers.
\STATE Server updates residual error
\begin{equation}\label{residualupdate}
\tilde{\boldsymbol{e}}^{(t+1)} = \frac{1}{|S^{(t)}|}\sum_{m \in S^{(t)}}\Delta_{m}^{(t)} + \tilde{\boldsymbol{e}}^{(t)} - \tilde{\boldsymbol{g}}^{(t)}.
\end{equation}
\STATE All workers update $w^{(t+1)} = w^{(t)} - \eta\eta_{L}\tilde{\boldsymbol{g}}^{(t)}$
\ENDFOR
\end{algorithmic}
\end{algorithm}
\vspace{-0.1in}
\section{{\scriptsize SPARSIGN}SGD with Local Updates}\label{sparse_with_local}
\noindent Besides gradient compression, another popular approach to improve the communication efficiency is the local update scheme, which reduces the number of communication rounds needed for convergence \cite{mcmahan2017communication}. The basic idea is to allow the workers to perform multiple local iterations during each communication round.

In this section, we incorporate the local update scheme into {\scriptsize SPARSIGN}SGD, and the corresponding algorithm is presented in Algorithm \ref{QuantizedSIGNSGDLocal}. More specifically, during each communication round, the parameter server selects a random set of workers, and each worker performs multiple local steps using the compressed gradients $Q(\boldsymbol{g}_{m}^{(t,c)},\boldsymbol{B}_{m}^{(t,c)})=sparsign(\boldsymbol{g}_{m}^{(t,c)},\boldsymbol{B}_{m}^{(t,c)})$. After $\tau$ local steps, each worker transmits the compressed model update to the parameter server. The parameter server adopts an $\alpha$-approximate compressor $\mathcal{C}(\cdot)$ (i.e., $||\mathcal{C}(\boldsymbol{x})-\boldsymbol{x}||_{2}^{2} \leq (1-\alpha)||\boldsymbol{x}||_{2}^{2}, \forall \boldsymbol{x}$ \cite{karimireddy2019error}) to aggregate the model updates and utilizes the error-feedback mechanism to compensate for the error introduced by the server-to-worker compression. In our experiments, we adopt the scaled sign compressor {\color{black}(i.e., $\mathcal{C}(\boldsymbol{x}) = \frac{||\boldsymbol{x}||_{1}}{d}sign(\boldsymbol{x})$) \cite{karimireddy2019error}. We note that different from the majority of existing works, the error-feedback mechanism is only adopted on the server side. Therefore, the proposed method is compatible with worker sampling in federated learning}.

Different from the existing literature that uses full precision gradients for local updates, Algorithm \ref{QuantizedSIGNSGDLocal} adopts compression during the local training as well. On one hand, this mimics the gradient compression at the server side, especially when $\mathcal{C}(\cdot) \in \{-1,0,1\}$. On the other hand, the output of $sparsign$ lies in the ternary set $\{-1,0,1\}$, and therefore is itself bounded by definition. In this sense, the difference between the global model and the local models after local training is also bounded, which further alleviates the impact of data heterogeneity. As a result, different from the existing works in federated learning, e.g., \cite{li2018federated,haddadpour2021federated}, we do not need the bounded gradient dissimilarity assumption for convergence analysis. Formally, we have the following theorem.

\begin{theorem}\label{EFDPSIGNConvergence2}
When Assumptions \ref{A1}, \ref{A2} and \ref{A3} are satisfied, by running Algorithm \ref{QuantizedSIGNSGDLocal} with $\eta_{L} = \frac{1}{\sqrt{Td}\tau}$, $\eta = \tau$, $\boldsymbol{B}_{m}^{(t,c)} = B_{l}^{(t)}\cdot\boldsymbol{1}$, $\boldsymbol{B}^{(t)}_{m} = B_{g}^{(t)}\cdot\boldsymbol{1}~\forall m$, $B_{l}^{(t)}B_{g}^{(t)} = B$, we have
\begin{equation}
\begin{split}
\frac{1}{T}\sum_{t=0}^{T-1}||\nabla F(w^{(t)})||^2 \leq \frac{(F(w^{(0)})-F^{*})\sqrt{d}}{B\tau\sqrt{T}} \\
+ \frac{(1+L+L^2\beta)\sqrt{d}}{B\tau\sqrt{T}}+ \frac{L^2(\tau+1)(2\tau+1)}{6T\tau^2}.
\end{split}
\end{equation}
\end{theorem}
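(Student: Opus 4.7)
The plan is to analyze Algorithm \ref{QuantizedSIGNSGDLocal} by combining the standard virtual-iterate trick for server-side error feedback with a careful accounting of the two stages of $sparsign$ compression on the worker side. Concretely, define the virtual sequence $\tilde{w}^{(t)} \triangleq w^{(t)} - \eta\eta_{L}\tilde{\boldsymbol{e}}^{(t)}$. Using the residual update (\ref{residualupdate}) and the fact that $\tilde{\boldsymbol{g}}^{(t)} = \mathcal{C}(p^{(t)}+\tilde{\boldsymbol{e}}^{(t)})$ where $p^{(t)} = \frac{1}{|S^{(t)}|}\sum_{m\in S^{(t)}}\Delta_{m}^{(t)}$, a routine calculation gives $\tilde{w}^{(t+1)} = \tilde{w}^{(t)} - \eta\eta_{L}p^{(t)}$, which eliminates $\mathcal{C}(\cdot)$ from the recursion. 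I would then apply Assumption \ref{A2} at $\tilde{w}^{(t)}$ to obtain $F(\tilde{w}^{(t+1)}) \le F(\tilde{w}^{(t)}) - \eta\eta_{L}\langle \nabla F(\tilde{w}^{(t)}), p^{(t)}\rangle + \frac{L}{2}\eta^{2}\eta_{L}^{2}\|p^{(t)}\|^{2}$, and finally transfer back to $w^{(t)}$ via $\nabla F(\tilde{w}^{(t)}) = \nabla F(w^{(t)}) + (\nabla F(\tilde{w}^{(t)})-\nabla F(w^{(t)}))$, controlling the discrepancy by $L\|\tilde{w}^{(t)}-w^{(t)}\| = L\eta\eta_{L}\|\tilde{\boldsymbol{e}}^{(t)}\|$.

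The next step is to identify the expected direction of $p^{(t)}$. Because each inner call $Q(\boldsymbol{g}_{m}^{(t,c)},B_{l}^{(t)}\cdot\boldsymbol{1})$ has coordinate-wise conditional expectation $B_{l}^{(t)}\boldsymbol{g}_{m,i}^{(t,c)}$ (from Definition \ref{definitioncompressor}), and each outer call $Q(\cdot,B_{g}^{(t)}\cdot\boldsymbol{1})$ contributes an extra factor $B_{g}^{(t)}$, the tower property together with Assumption \ref{A3} yields $\mathbb{E}[\Delta_{m}^{(t)}] = B \sum_{c=0}^{\tau-1}\mathbb{E}[\nabla f_{m}(w_{m}^{(t,c)})]$, so that $\mathbb{E}[p^{(t)}] = B\sum_{c=0}^{\tau-1}\frac{1}{M}\sum_{m=1}^{M}\mathbb{E}[\nabla f_{m}(w_{m}^{(t,c)})]$ when workers are sampled uniformly. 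Writing each $\nabla f_{m}(w_{m}^{(t,c)}) = \nabla f_{m}(w^{(t)}) + (\nabla f_{m}(w_{m}^{(t,c)})-\nabla f_{m}(w^{(t)}))$ and using $L$-smoothness produces the dominant term $B\tau\nabla F(w^{(t)})$ plus a local drift residual governed by $\|w_{m}^{(t,c)}-w^{(t)}\|$.

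To control the drift, I exploit that every $sparsign$ output lies in $\{-1,0,1\}^{d}$, hence each local step satisfies $\|w_{m}^{(t,c+1)}-w_{m}^{(t,c)}\|_{2}^{2} \le \eta_{L}^{2}d$. Unrolling gives $\|w_{m}^{(t,c)}-w^{(t)}\|_{2}^{2} \le c^{2}\eta_{L}^{2}d$, and summing over $c=0,\dots,\tau-1$ yields a contribution proportional to $\eta_{L}^{2}d\cdot\tau(\tau+1)(2\tau+1)/6$, which, after multiplying by $L^{2}$ and the prescribed step sizes $\eta_{L}=1/(\sqrt{Td}\tau)$, $\eta=\tau$, collapses to exactly the last term in the theorem $L^{2}(\tau+1)(2\tau+1)/(6T\tau^{2})$. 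The same bounded-output property gives $\|p^{(t)}\|_{2}^{2} \le d$, which bounds both the quadratic smoothness term and, through the standard $\alpha$-approximate-compressor argument ($\|\tilde{\boldsymbol{e}}^{(t+1)}\|^{2}\le(1-\alpha)\|p^{(t)}+\tilde{\boldsymbol{e}}^{(t)}\|^{2}$ unrolled geometrically), the accumulated error-feedback norm $\mathbb{E}\|\tilde{\boldsymbol{e}}^{(t)}\|^{2}\le \beta d$ for some $\beta$ depending only on $\alpha$.

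Finally, I would rearrange the per-round descent inequality into the form $B\tau\eta\eta_{L}\|\nabla F(w^{(t)})\|^{2} \le F(\tilde{w}^{(t)}) - F(\tilde{w}^{(t+1)}) + (\text{drift/error terms})$, telescope from $t=0$ to $T-1$, divide by $B\tau\eta\eta_{L}T$, and plug in $\eta_{L}=1/(\sqrt{Td}\tau)$, $\eta=\tau$; the resulting $(1+L+L^{2}\beta)\sqrt{d}/(B\tau\sqrt{T})$ term precisely collects the smoothness quadratic, the $\tilde{w}\!\leftrightarrow\! w$ correction, and the error-feedback residual, matching the claim. The main obstacle I anticipate is the coupled handling of the \emph{two} compression layers together with stochastic worker sampling: the inner $sparsign$ injects noise into every $w_{m}^{(t,c)}$ (so later local gradients become random even after conditioning), forcing the unbiasedness argument and the drift bound to be run simultaneously under nested conditional expectations, and one must verify that the implicit requirement $|\boldsymbol{g}_{m,i}^{(t,c)}|B_{l}^{(t)}\le 1$ and $|h_{m,i}^{(t)}|B_{g}^{(t)}\le 1$ (with $|h_{m,i}^{(t)}|\le\tau$) is consistent with the choice $B_{l}^{(t)}B_{g}^{(t)}=B$ before the telescoping step.
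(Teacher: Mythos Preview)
Your proposal is correct and follows essentially the same route as the paper. You reproduce the paper's two preliminary lemmas (the virtual-iterate recursion $y^{(t)}=w^{(t)}-\eta\eta_L\tilde{\boldsymbol{e}}^{(t)}$ satisfying $y^{(t+1)}=y^{(t)}-\eta\eta_L p^{(t)}$, and the geometric bound $\mathbb{E}\|\tilde{\boldsymbol{e}}^{(t)}\|^2\le\beta d$ from the $\alpha$-approximate compressor), then exploit exactly the same three structural facts the paper uses: (i) the tower property on the two $sparsign$ layers giving $\mathbb{E}[\Delta_m^{(t)}]=B\sum_c\nabla f_m(w_m^{(t,c)})$, (ii) the ternary-output bound $\|p^{(t)}\|^2\le d$ for the smoothness quadratic and the error-feedback residual, and (iii) the coordinate-wise drift bound $\|w_m^{(t,c)}-w^{(t)}\|^2\le c^2\eta_L^2 d$ summed to $\tau(\tau+1)(2\tau+1)/6$. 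Your closing remark about the implicit constraint $|\boldsymbol{g}_{m,i}^{(t,c)}|B_l^{(t)}\le 1$ is precisely the caveat the paper records in the Remark following its proof.
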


\begin{Remark}
We note that different from Theorem \ref{convergerate}, Theorem \ref{EFDPSIGNConvergence2} does not require a large $M$.
\end{Remark}
\begin{Remark}
\color{black}
In the implementation of the $sparsign$ compressor, there are multiple ways to set the compression budgets $B_{l}^{(t)}$ and $B_{g}^{(t)}$. For instance, the magnitude sharing protocol in \cite{wen2017terngrad} can be adopted, in which the workers share the $L_{\infty}$ norm of the gradients, and the parameter server sets the compression budget by taking the maximum of the $L_{\infty}$ norms. In our experiments, we set some pre-determined values for $B_{l}^{(t)}$ and $B_{g}^{(t)}$. In this case, it is possible that the probabilities in Definition \ref{definitioncompressor} fall out of $[0,1]$, and we round them to 0 or 1, respectively. This is equivalent to gradient clipping, which is commonly adopted in deep learning \cite{chen2020understanding,zhang2022understanding}.
\end{Remark}
\section{Experiments}
\noindent In this section, we first verify the theoretical results in Section \ref{ternarysection} and demonstrate the effectiveness of the proposed compressor by comparing the probability of wrong aggregation between the deterministic $sign$ and $sparsign$ in the minimization of the Rosenbrock function. Then, we examine the performance of the proposed algorithms on Fashion-MNIST, CIFAR-10 and CIFAR-100.

\begin{table*}[t]
\vspace{-0.1in}
\caption{Learning Performance on Fashion-MNIST ($\alpha = 0.1$)}
\vspace{-0.1in}
\label{table_fashionmnist}
\begin{center}
\begin{sc}
\begin{tabular}{cccccc}
\toprule
Algorithm & \makecell{Final Accuracy} & \makecell{Communications Round \\to Achieve 74\%} & \makecell{Communication Overhead \\to Achieve 74\% (bits)}\\
\midrule
\makecell{{\scriptsize SIGN}SGD} & 74.44$\pm$0.71\%& 193 & $4.56\times10^{7}$\\
\makecell{Scaled {\scriptsize SIGN}SGD} & 69.61$\pm$1.99\%& N.A. & N.A.\\
\makecell{Noisy {\scriptsize SIGN}SGD} & 77.84$\pm$0.37\%& 79 & $1.88\times10^{7}$\\
\makecell{1-bit $L_2$ norm QSGD} & 79.05$\pm$1.22\%& 75 & $1.98\times10^{5}$\\
\makecell{1-bit $L_{\infty}$ norm QSGD} & 80.07$\pm$0.75\%& 68 & $1.13\times10^{6}$\\
\makecell{TernGrad} & 79.17$\pm$1.41\%& 66 & $4.34\times10^{5}$\\
\makecell{{\scriptsize SPARSIGN}SGD ($B=1$)} & 79.05$\pm$0.39\%& 65 & $8.19\times10^{5}$\\
\makecell{{\scriptsize EF-SPARSIGN}SGD \\($B_{l}^{(t)}=10$, $B_{g}^{(t)}=1$, $\tau = 1$)} & 80.75$\pm$0.20\%& 65 & $1.93\times10^{5}$\\
\bottomrule
\end{tabular}
\end{sc}
\end{center}
\end{table*}

\begin{figure}[t]
\centering
\begin{minipage}[t]{0.49\linewidth}
{\includegraphics[width=1\textwidth]{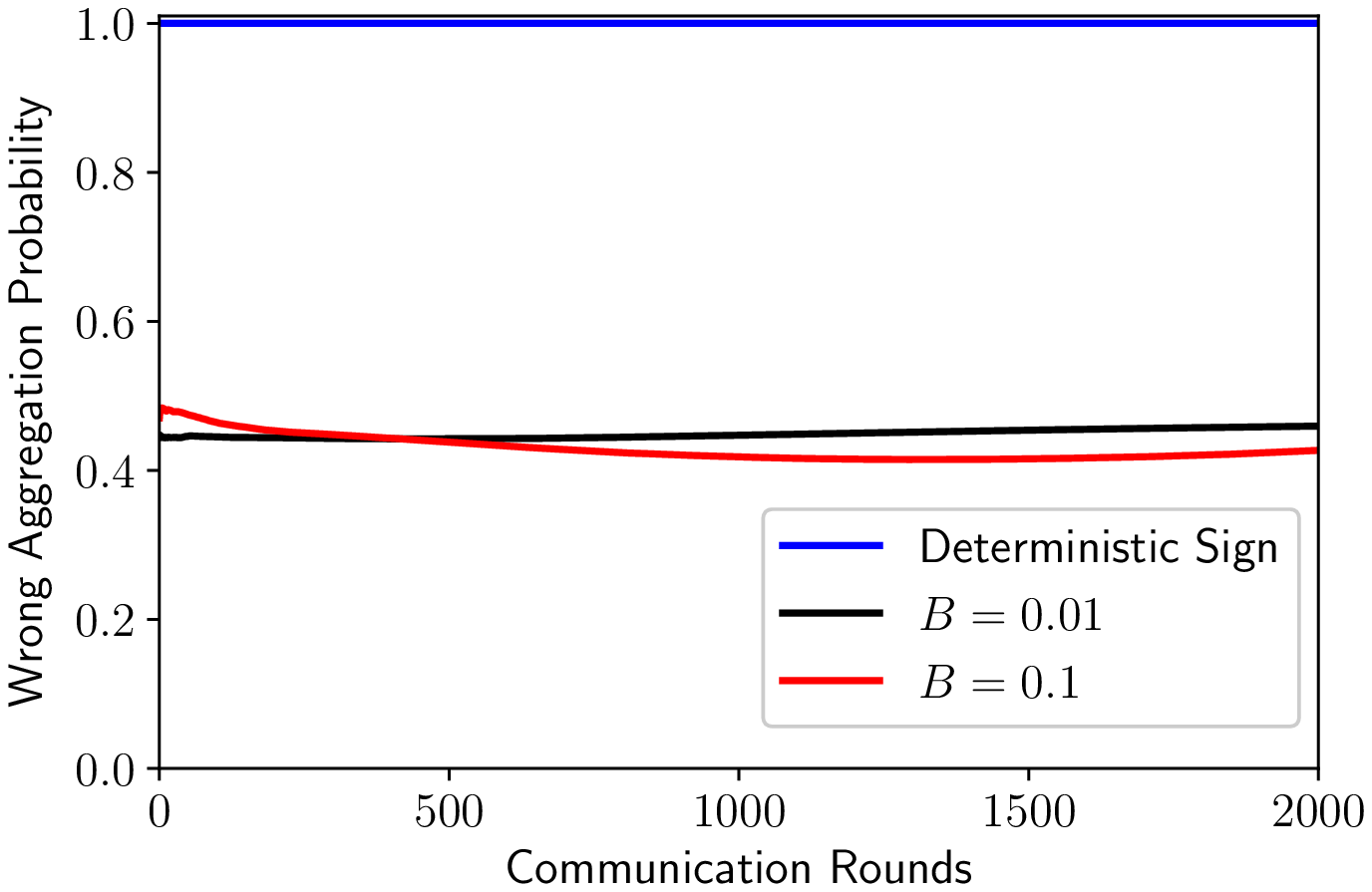}}
\end{minipage}
\begin{minipage}[t]{0.49\linewidth}
{\includegraphics[width=1\textwidth]{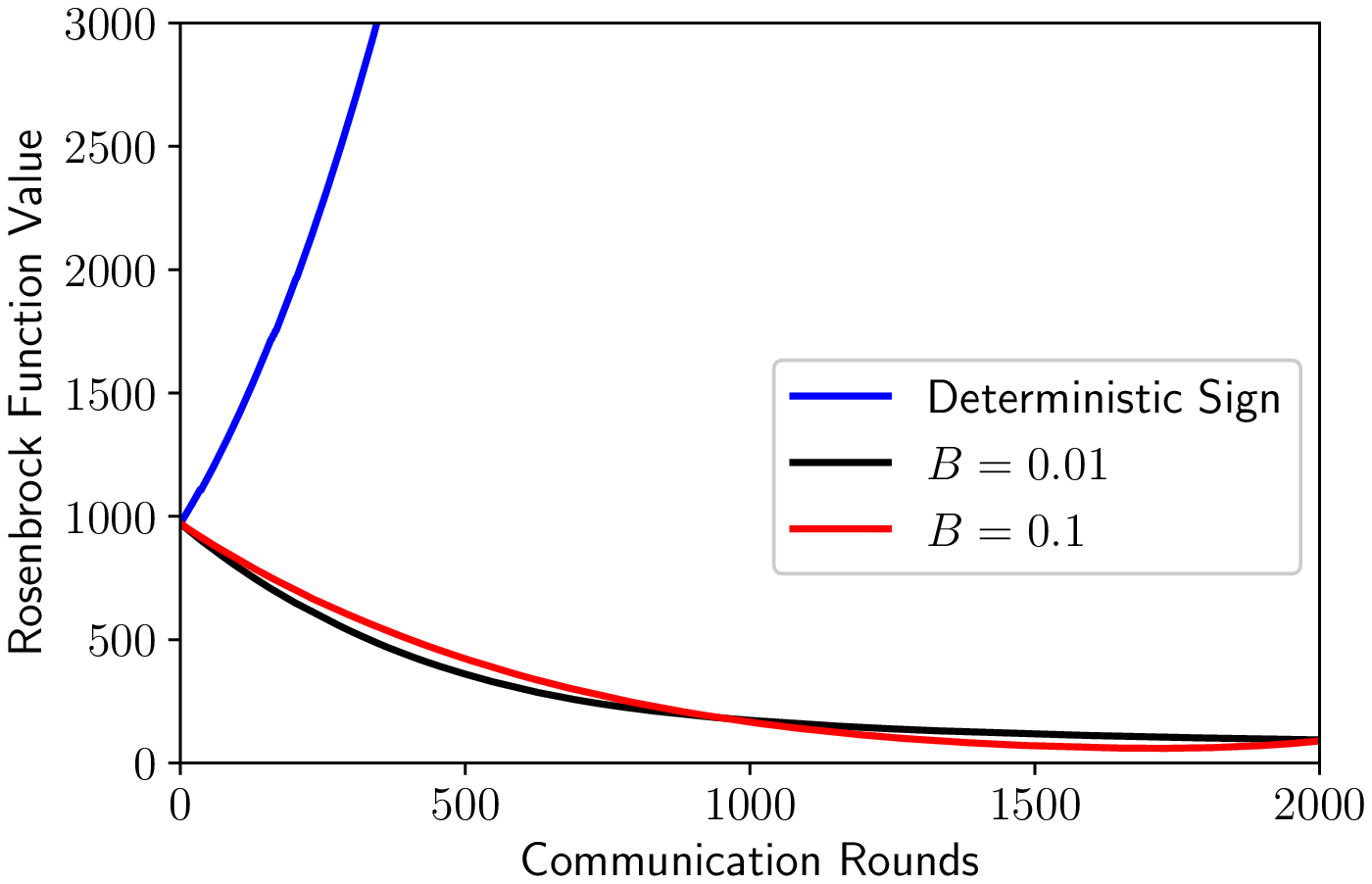}}
\end{minipage}%
\vspace{-0.1in}
\caption{The left figure shows the probability of wrong aggregation and the right figure shows the Rosenbrock function value that is minimized. During each communication round, 10 out of 100 workers are randomly selected to participate in the training.}
\label{suppimpact_pbar}
\vspace{-0.1in}
\end{figure}
\begin{figure}[t]
\centering
\begin{minipage}[t]{0.49\linewidth}
{\includegraphics[width=1\textwidth]{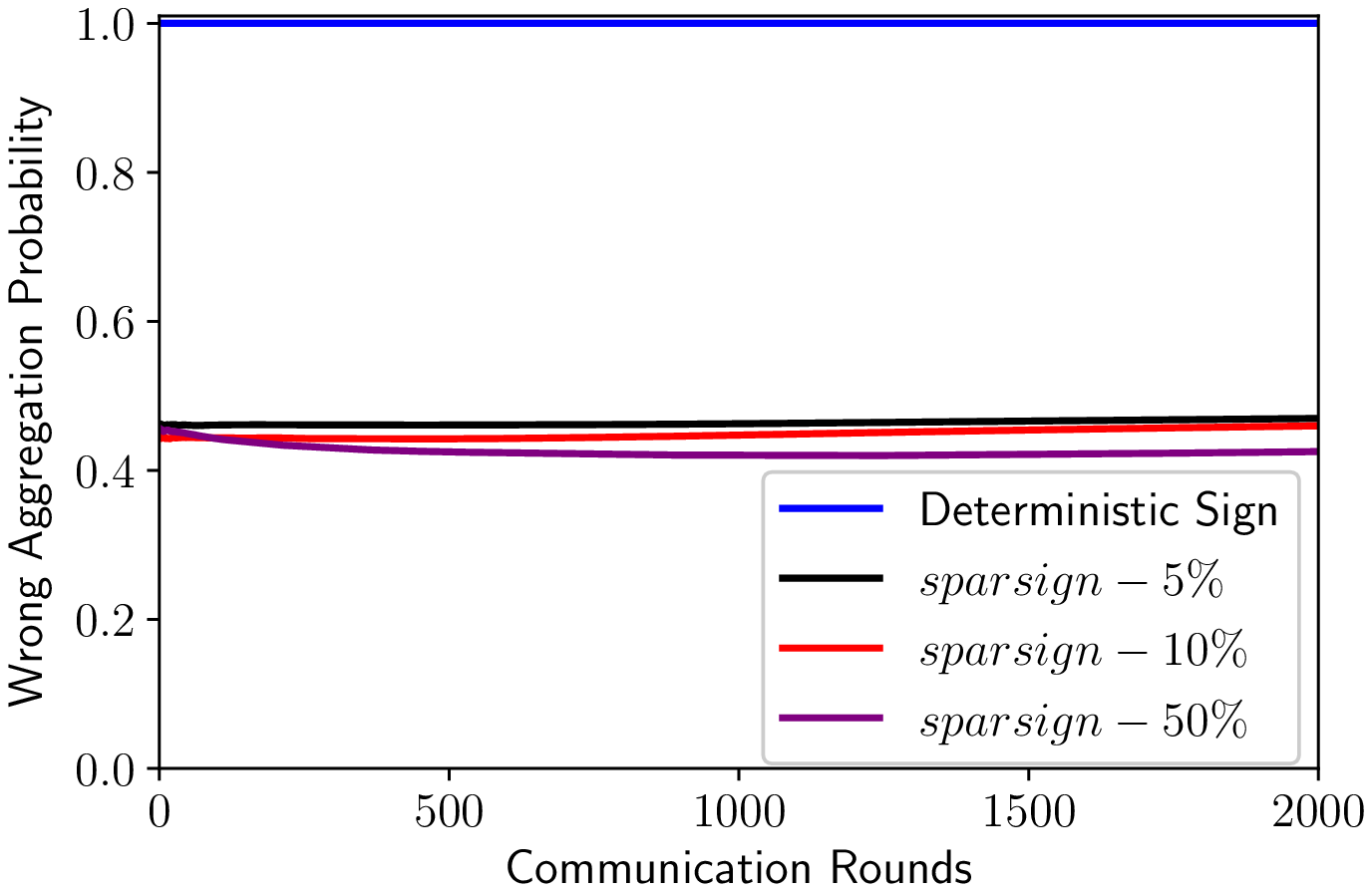}}
\end{minipage}
\begin{minipage}[t]{0.49\linewidth}
{\includegraphics[width=1\textwidth]{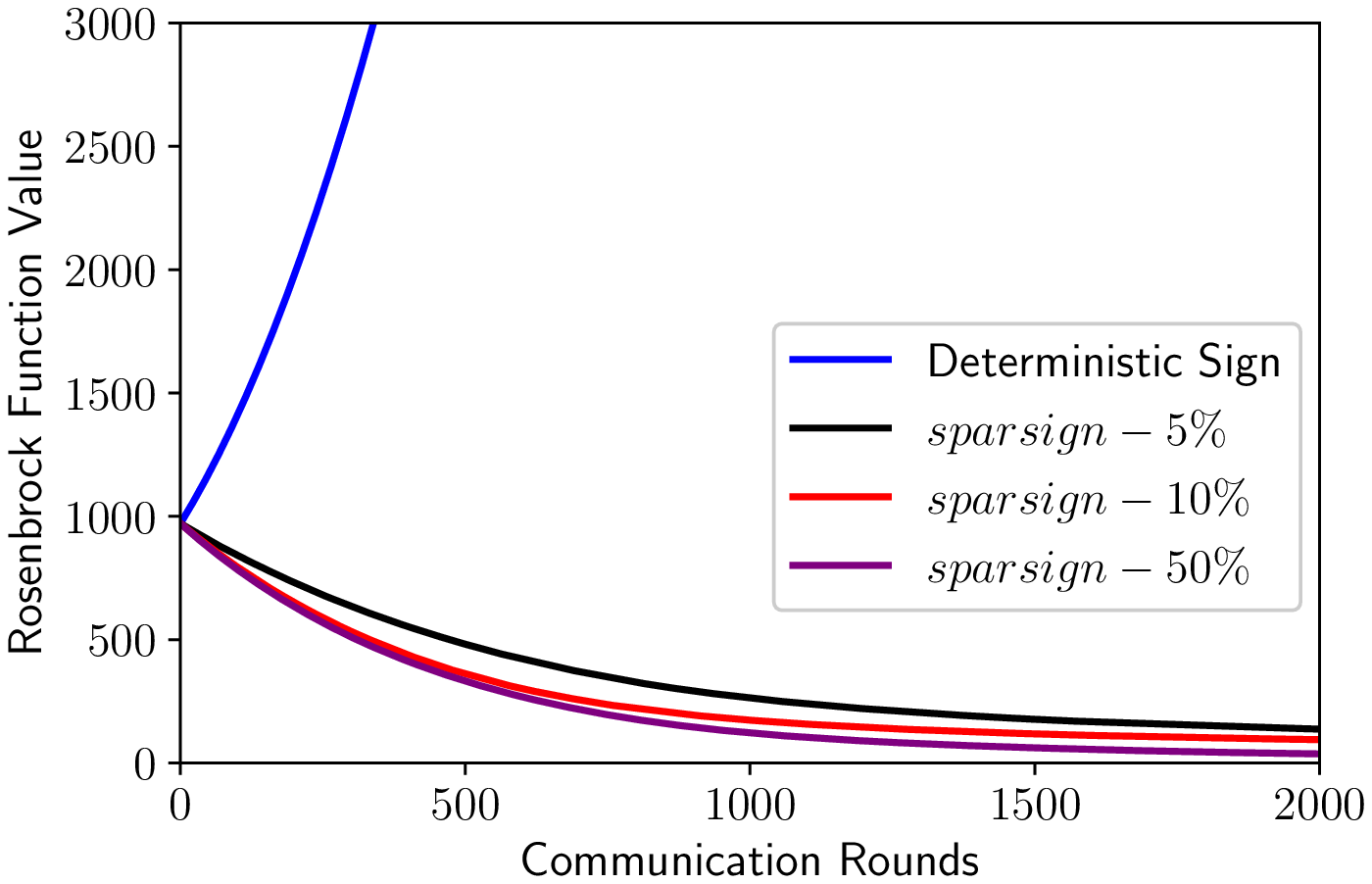}}
\end{minipage}%
\vspace{-0.1in}
\caption{The left figure shows the probability of wrong aggregation and the right figure shows the Rosenbrock function value that is minimized. For ``Deterministic Sign``, all the workers are selected to participate in the training during each communication round. For $sparsign$, $B = 0.01$, and $5\%$, $10\%$, $50\%$ workers are selected during each communication round, respectively.}
\label{suppimpact_M}
\vspace{-0.2in}
\end{figure}
\subsection{Minimization of the Rosenbrock Function}
\noindent In this subsection, we consider the minimization of the well-known Rosenbrock function with 10 variables as in \cite{safaryan2021stochastic}:
\begin{equation}
\begin{split}
    &~~~~~~~~~~~~~~~~~~F(x) = \sum_{i=1}^{d}F_{i}(x), \\
    &\text{where}~ F_{i}(x) = 100(x_{i+1}-x_{i}^2)+(1-x_{i})^2.
\end{split}
\end{equation}

The collaborative training of $M=100$ workers is considered. To simulate the data heterogeneity, we assume that each worker $m$ has access to a scaled objective $v_{m}F(\cdot)$, in which $v_{m}$ is some random number such that
\begin{equation}
\begin{split}
\sum_{m=1}^{M}v_{m} = 1,~~\sum_{m=1}^{M}\mathds{1}_{v_{m} < 0} = 80.
\end{split}
\end{equation}
The second condition suggests that the signs of the gradients from 80 out of the 100 workers are opposite to those of the true gradients. Fig. \ref{suppimpact_pbar} shows the probability of wrong aggregation and the Rosenbrock function value for deterministic sign (i.e., {\scriptsize SIGN}SGD) and $sparsign$ (i.e., {\scriptsize SPARSIGN}SGD) with $B \in \{0.01,0.1\}$. It can be observed that the probability of wrong aggregation {\color{black}(i.e., the signs of the aggregated results are different from those of the true gradients)} for the deterministic sign is 1, and {\scriptsize SIGN}SGD diverges, while the probability of wrong aggregation for $sparsign$ is always smaller than $\frac{1}{2}$, and {\scriptsize SPARSIGN}SGD converges, which verifies the effectiveness of $sparsign$. Fig. \ref{suppimpact_M} examines the impact of worker sampling. It can be observed that as the number of selected workers increases, the probability of wrong aggregation decreases, and the algorithm converges faster, which validates our results in Remark \ref{Remark3}.

\subsection{Results on Fashion-MNIST and CIFAR-10}
\noindent In this subsection, we implement our proposed methods with a three-layer fully connected neural network on the Fashion-MNIST dataset and VGG-9 \citep{lee2020enabling} on the CIFAR-10 dataset. For Fashion-MNIST, we use a fixed learning rate, which is tuned from the set $\{0.0001, 0.001, 0.01, 0.1, 1.0\}$. For CIFAR-10, we tune the initial learning rate from the set $\{0.0001, 0.001, 0.01, 0.1, 1.0\}$, which is reduced by a factor of 2 at communication round 1,500. We consider a scenario of $M=100$ normal workers and follow \cite{hsu2019measuring} to simulate heterogeneous data distribution, in which the training data on each worker are drawn independently with class labels following a Dirichlet distribution. More specifically, a vector of length $C$ that follows the Dirichlet distribution $Dir(\alpha)$ is generated, in which $C$ is the number of classes, and $\alpha$ controls the level of data heterogeneity. Each element of the vector specifies the {\color{black}proportion} of training examples that belong to the corresponding class. The mini-batch sizes of the workers are set to 128 and 32 for Fashion-MNIST and CIFAR-10, respectively. For the experiments on Fashion-MNIST, the test accuracy is evaluated for every communication round, while for those on CIFAR-10, it is evaluated for every 25 communication rounds.

We compare the proposed methods with various baseline {\color{black}compressors}, including {\scriptsize SIGN}SGD \cite{bernstein2018signsgd1}, {Scaled {\scriptsize SIGN}SGD} \cite{karimireddy2019error}, {Noisy {\scriptsize SIGN}SGD} \cite{chen2019distributed}, TernGrad \cite{wen2017terngrad}, QSGD \cite{alistarh2017qsgd} (including 1-bit $L_2$ norm QSGD and 1-bit $L_{\infty}$ norm QSGD). More details can be found in Section \ref{DetailBaselines} in the supplementary material. {\color{black}Particularly, {Scaled {\scriptsize SIGN}SGD} and {Noisy {\scriptsize SIGN}SGD} are proposed to address the non-convergence issue of {\scriptsize SIGN}SGD, while the compressor of QSGD is one of the most commonly adopted quantization methods in the literature (e.g., \citep{pmlr-v108-reisizadeh20a,haddadpour2021federated,philippenko2020bidirectional}).}

\begin{table*}[th!]
\vspace{-0.1in}
\caption{Learning Performance on CIFAR-10 ($\alpha = 0.5$)}
\vspace{-0.1in}
\label{table_cifar}
\begin{center}
\begin{sc}
\begin{tabular}{cccccc}
\toprule
Algorithm & \makecell{Final Accuracy} & \makecell{Communications Rounds \\to Achieve 55\%/74\%} & \makecell{Communication Overhead \\to Achieve 55\%/74\% (bits)}\\
\midrule
\makecell{{\scriptsize SIGN}SGD} & 55.35$\pm$0.71\%& 3,000/N.A. & $1.15\times10^{10}$/N.A.\\
\makecell{Scaled {\scriptsize SIGN}SGD} & 46.86$\pm$2.72\%& N.A./N.A. & N.A./N.A.\\
\makecell{Noisy {\scriptsize SIGN}SGD} & 74.41$\pm$0.61\%& 625/2,600 & $2.31\times10^{9}$/$9.89\times10^{9}$\\
\makecell{1-bit $L_2$ norm QSGD} & 54.58$\pm$0.35\%& N.A./N.A. & N.A./N.A.\\
\makecell{1-bit $L_{\infty}$ norm QSGD} & 74.52$\pm$0.58\%& 750/2,950 & $1.64\times10^{8}$/$1.05\times10^{9}$\\
\makecell{TernGrad} & 74.92$\pm$0.42\%& 800/2,800 & $9.61\times10^{7}$/$5.38\times10^{8}$\\
\makecell{{\scriptsize SPARSIGN}SGD ($B=1$)} & 62.34$\pm$0.58\%& 1,550/N.A. & $1.44\times10^{8}$/N.A.\\
\makecell{{\scriptsize EF-SPARSIGN}SGD \\($B_{l}^{(t)}=10$, $B_{g}^{(t)}=1$, $\tau = 1$)} & 78.51$\pm$0.51\%& 300/1,025 & $7.42\times10^{7}$/$4.24\times10^{8}$\\
\bottomrule
\end{tabular}
\end{sc}
\end{center}
\end{table*}

\begin{table*}
\vspace{-0.2in}
\caption{Learning Performance on CIFAR-10 ($\alpha = 0.5$)}
\vspace{-0.1in}
\label{table_cifar_local}
\begin{center}
\begin{sc}
\begin{tabular}{cccccc}
\toprule
Algorithm & \makecell{Final Accuracy} & \makecell{Communications Rounds \\to Achieve 74\%} & \makecell{Communication Overhead \\to Achieve 74\% (bits)}\\
\midrule
\makecell{FedCom-Local5} & 76.03$\pm$0.53\%& 1,025 & $2.75\times10^{9}$\\
\makecell{FedCom-Local10} & 76.20$\pm$0.05\%& 575 & $1.51\times10^{9}$\\
\makecell{FedCom-Local20} & 77.10$\pm$0.29\%& 425 & $1.10\times10^{9}$\\
\makecell{{\scriptsize EF-SPARSIGN}SGD-Local5} & 79.84$\pm$0.17\%& 550 & $3.39\times10^{8}$\\
\makecell{{\scriptsize EF-SPARSIGN}SGD-Local10} & 79.61$\pm$0.25\%& 450 & $2.58\times10^{8}$\\
\makecell{{\scriptsize EF-SPARSIGN}SGD-Local20} & 79.46$\pm$0.09\%& 475 & $2.14\times10^{8}$\\
\bottomrule
\end{tabular}
\end{sc}
\end{center}
\end{table*}

We note that Scaled {\scriptsize SIGN}SGD and Noisy {\scriptsize SIGN}SGD use 1 bit to represent each coordinate of the gradients. 1-bit $L_2$ norm QSGD, 1-bit $L_{\infty}$ norm QSGD and TernGrad are ternary-based methods, which can be understood as special cases of  {\scriptsize SPARSIGN}SGD. For all the ternary-based gradient descent methods, instead of using $\log_2(3)$ bits to represent each coordinate as in \cite{wen2017terngrad}, the lossless Golomb code \cite{golomb1966run} can be adopted to encode the indices of the non-zeros elements. The average number of bits for each index is given by \cite{sattler2019robust}
\begin{equation}
\bar{b} = b^{*} + \frac{1}{1-(1-p)^{2^{b^{*}}}},
\end{equation}
in which $b^{*} = 1 + \lfloor\log_{2}(\frac{\log(\sqrt{5}+\frac{1}{2})}{\log(1-p)})\rfloor$ and $p$ is the sparsity ratio.

{\textbf{Effectiveness of the compressor:} Table \ref{table_fashionmnist} and Table \ref{table_cifar}} compare the performance of the proposed methods with the baselines. We run the algorithms for 200 and 3,000 communication rounds for Fashion-MNIST and CIFAR-10, respectively. We consider full worker participation (i.e., all the 100 workers participate in the training process throughout the training period) for Fashion-MNIST and 20\% worker participating ratio (i.e., 20 workers are randomly sampled to participate during each communication round) for CIFAR-10. The final test accuracy, the number of communication rounds, and the corresponding communication overhead (from the workers to the server) to achieve a test accuracy of 55\%/74\% are presented. ``N.A.” means that the corresponding algorithm does not achieve the required test accuracy. It can be observed that {\scriptsize EF-SPARSIGN}SGD outperforms all the baselines in both the number of communication rounds and the communication overhead. For instance, TernGrad requires $2.7\times$ communication rounds and 27\% more communication overhead to achieve a test accuracy of 74\% on CIFAR-10, and the gap is larger for the other baselines.

\begin{figure}
\vspace{-0.1in}
\centering
\begin{subfigure}
  \centering
  \includegraphics[width=0.23\textwidth]{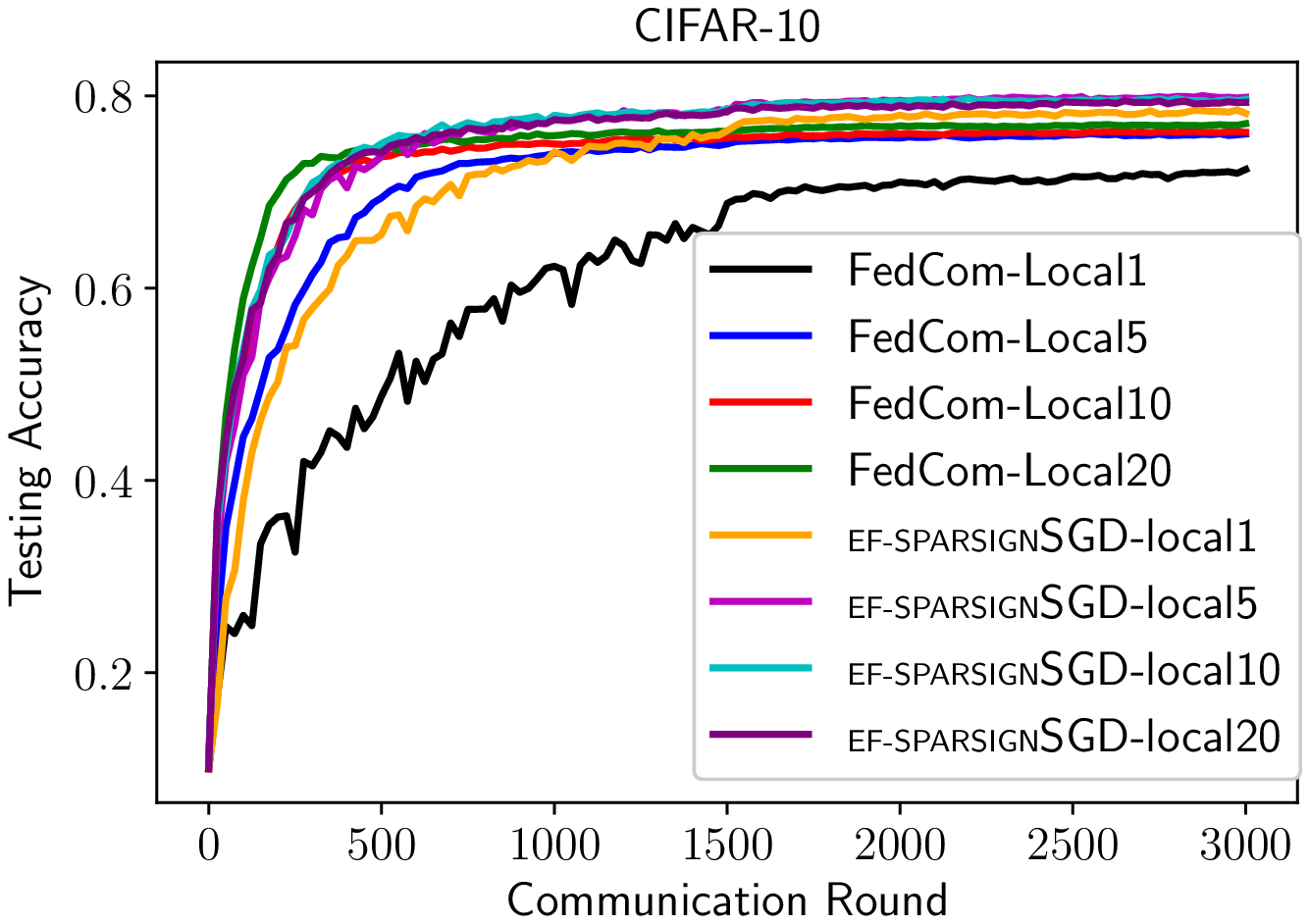}
\end{subfigure}%
\begin{subfigure}
  \centering
  \includegraphics[width=0.23\textwidth]{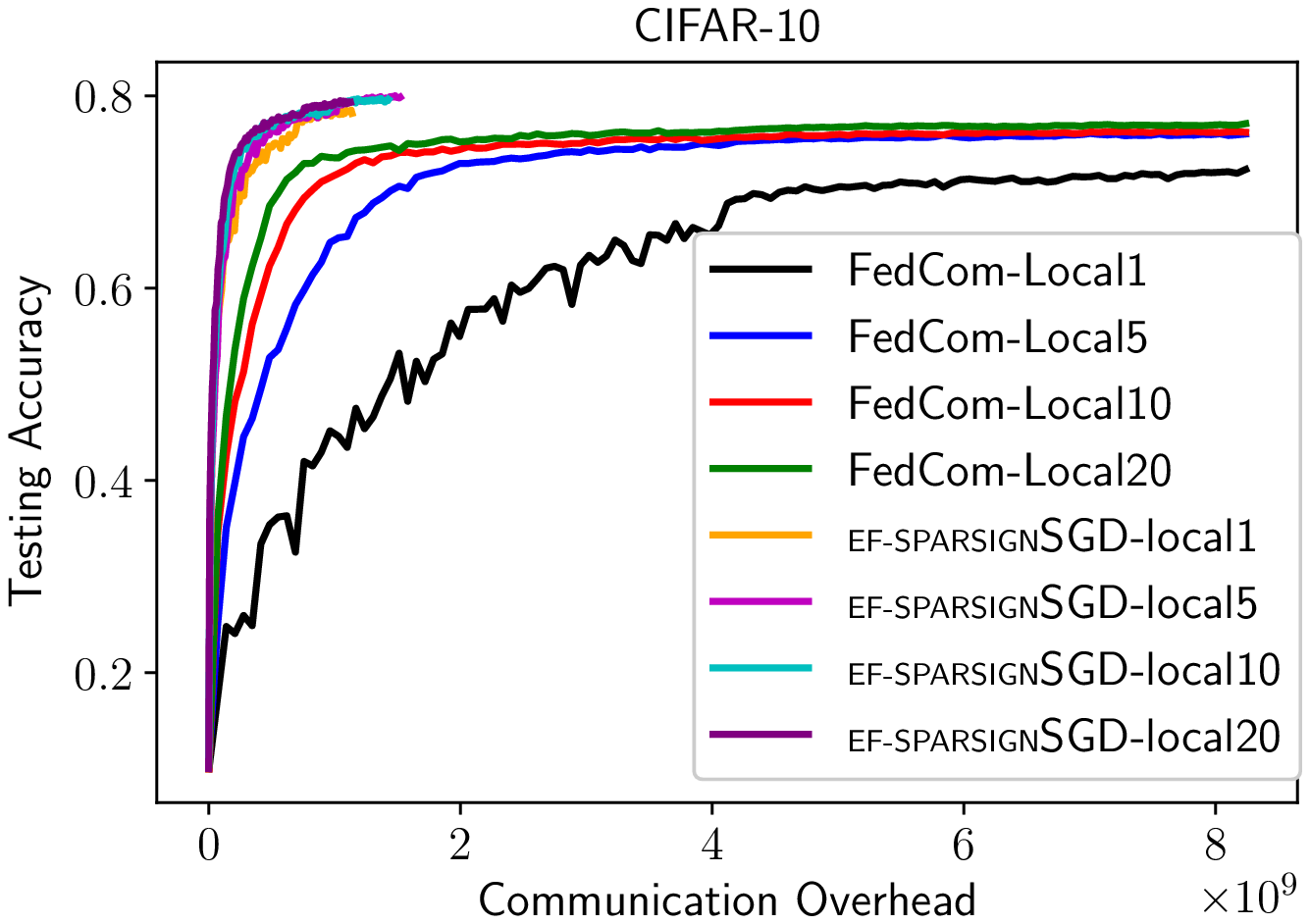}
\end{subfigure}
\vspace{-0.2in}
\caption{The left and right figures compare the testing accuracy of {\scriptsize EF-SPARSIGN}SGD and FedCom with respect to the number of communication rounds and the communication overhead, respectively.}
\label{figure_local}
\vspace{-0.2in}
\end{figure}

{\textbf{Impact of the local training steps:}} Fig.\ref{figure_local} and Table. \ref{table_cifar_local} compare the performance of {\scriptsize EF-SPARSIGN}SGD and FedCom \cite{haddadpour2021federated} that incorporates the compressor of QSGD \cite{alistarh2017qsgd} into FedAvg \cite{mcmahan2017communication}. We consider 8-bit QSGD as suggested in the experiments in \cite{haddadpour2021federated} and obtain the communication overhead (in bits) for each communication round by following Theorem 3.4 in \cite{alistarh2017qsgd}. It can be observed that the communication efficiency of {\scriptsize EF-SPARSIGN}SGD improves as the number of local steps increases from 1 to 20. In addition, {\scriptsize EF-SPARSIGN}SGD outperforms FedCom in both final accuracy and the overall communication overhead to achieve a test accuracy of 74\%, which demonstrates its effectiveness. Similar results are obtained on the CIFAR-100 dataset {\color{black}with $\alpha \in \{0.1,0.3,0.6,1.0\}$}, which can be found in Section \ref{AdditionalResults} of the supplementary material.

\section{Conclusion}
In this work, we addressed the non-convergence issue of {\scriptsize SIGN}SGD in the presence of data heterogeneity. The convergence analysis for {\color{black}sign-based gradient descent methods} was extended to the ternary case, which essentially captured the sparsification and the worker sampling mechanisms in federated learning. A sufficient condition for the convergence of ternary-based gradient descent methods was derived, based on which a magnitude-driven gradient sparsification scheme was incorporated into the sign-based compressor to deal with heterogeneous data distributions. Based on the proposed compressor, two communication-efficient algorithms, {\scriptsize SPARSIGN}SGD and {\scriptsize EF-SPARSIGN}SGD are proposed, and the corresponding convergence is established. Extensive experiments are conducted to validate the effectiveness of the proposed algorithms. The proposed methods are expected to find wide applications in the paradigms like federated learning.

\newpage
\bibliography{Ref-Richeng}

\begin{thebibliography}{43}
\providecommand{\natexlab}[1]{#1}
\providecommand{\url}[1]{\texttt{#1}}
\expandafter\ifx\csname urlstyle\endcsname\relax
  \providecommand{\doi}[1]{doi: #1}\else
  \providecommand{\doi}{doi: \begingroup \urlstyle{rm}\Url}\fi

\bibitem[Alistarh et~al.(2017)Alistarh, Grubic, Li, Tomioka, and
  Vojnovic]{alistarh2017qsgd}
Alistarh, D., Grubic, D., Li, J., Tomioka, R., and Vojnovic, M.
\newblock {QSGD}: Communication-efficient {SGD} via gradient quantization and
  encoding.
\newblock In \emph{Advances in Neural Information Processing Systems}, pp.\
  1709--1720, 2017.

\bibitem[Alistarh et~al.(2018)Alistarh, Hoefler, Johansson, Konstantinov,
  Khirirat, and Renggli]{alistarh2018convergence}
Alistarh, D., Hoefler, T., Johansson, M., Konstantinov, N., Khirirat, S., and
  Renggli, C.
\newblock The convergence of sparsified gradient methods.
\newblock In \emph{Advances in Neural Information Processing Systems}, pp.\
  5973--5983, 2018.

\bibitem[Basu et~al.(2019)Basu, Data, Karakus, and Diggavi]{basu2019qsparse}
Basu, D., Data, D., Karakus, C., and Diggavi, S.
\newblock Qsparse-local-sgd: Distributed sgd with quantization, sparsification
  and local computations.
\newblock \emph{Advances in Neural Information Processing Systems}, 32, 2019.

\bibitem[Bernstein et~al.(2018)Bernstein, Wang, Azizzadenesheli, and
  Anandkumar]{bernstein2018signsgd1}
Bernstein, J., Wang, Y.-X., Azizzadenesheli, K., and Anandkumar, A.
\newblock sign{SGD}: Compressed optimisation for non-convex problems.
\newblock In \emph{International Conference on Machine Learning}, pp.\
  560--569, 2018.

\bibitem[Bernstein et~al.(2019)Bernstein, Zhao, Azizzadenesheli, and
  Anandkumar]{bernstein2018signsgd2}
Bernstein, J., Zhao, J., Azizzadenesheli, K., and Anandkumar, A.
\newblock sign{SGD} with majority vote is communication efficient and byzantine
  fault tolerant.
\newblock In \emph{In Seventh International Conference on Learning
  Representations (ICLR)}, 2019.

\bibitem[Carlson et~al.(2015)Carlson, Hsieh, Collins, Carin, and
  Cevher]{carlson2015stochastic}
Carlson, D., Hsieh, Y.~P., Collins, E., Carin, L., and Cevher, V.
\newblock Stochastic spectral descent for discrete graphical models.
\newblock \emph{IEEE Journal of Selected Topics in Signal Processing},
  10\penalty0 (2):\penalty0 296--311, 2015.

\bibitem[Chen et~al.(2016)Chen, Pan, Monga, Bengio, and
  Jozefowicz]{chen2016revisiting}
Chen, J., Pan, X., Monga, R., Bengio, S., and Jozefowicz, R.
\newblock Revisiting distributed synchronous sgd.
\newblock \emph{arXiv preprint arXiv:1604.00981}, 2016.

\bibitem[Chen et~al.(2020{\natexlab{a}})Chen, Chen, Sun, Wu, and
  Hong]{chen2019distributed}
Chen, X., Chen, T., Sun, H., Wu, S.~Z., and Hong, M.
\newblock Distributed training with heterogeneous data: Bridging median- and
  mean-based algorithms.
\newblock In \emph{Advances in Neural Information Processing Systems},
  volume~33, pp.\  21616--21626, 2020{\natexlab{a}}.

\bibitem[Chen et~al.(2020{\natexlab{b}})Chen, Wu, and
  Hong]{chen2020understanding}
Chen, X., Wu, S.~Z., and Hong, M.
\newblock Understanding gradient clipping in private sgd: A geometric
  perspective.
\newblock \emph{Advances in Neural Information Processing Systems},
  33:\penalty0 13773--13782, 2020{\natexlab{b}}.

\bibitem[Dean et~al.(2012)Dean, Corrado, Monga, Chen, Devin, Mao, Senior,
  Tucker, Yang, Le, et~al.]{dean2012large}
Dean, J., Corrado, G., Monga, R., Chen, K., Devin, M., Mao, M., Senior, A.,
  Tucker, P., Yang, K., Le, Q.~V., et~al.
\newblock Large scale distributed deep networks.
\newblock In \emph{Advances in neural information processing systems}, pp.\
  1223--1231, 2012.

\bibitem[Gao et~al.(2021)Gao, Xu, and Huang]{gao2021convergence}
Gao, H., Xu, A., and Huang, H.
\newblock On the convergence of communication-efficient local sgd for federated
  learning.
\newblock In \emph{Proceedings of the AAAI Conference on Artificial
  Intelligence, Virtual}, pp.\  18--19, 2021.

\bibitem[Golomb(1966)]{golomb1966run}
Golomb, S.
\newblock Run-length encodings (corresp.).
\newblock \emph{IEEE transactions on information theory}, 12\penalty0
  (3):\penalty0 399--401, 1966.

\bibitem[Haddadpour et~al.(2021)Haddadpour, Kamani, Mokhtari, and
  Mahdavi]{haddadpour2021federated}
Haddadpour, F., Kamani, M.~M., Mokhtari, A., and Mahdavi, M.
\newblock Federated learning with compression: Unified analysis and sharp
  guarantees.
\newblock In \emph{International Conference on Artificial Intelligence and
  Statistics}, pp.\  2350--2358. PMLR, 2021.

\bibitem[Horv{\'a}th \& Richt{\'a}rik(2021)Horv{\'a}th and
  Richt{\'a}rik]{horvath2021better}
Horv{\'a}th, S. and Richt{\'a}rik, P.
\newblock A better alternative to error feedback for communication-efficient
  distributed learning.
\newblock \emph{International Conference on Learning Representations}, 2021.

\bibitem[Horv{\'a}th et~al.(2019)Horv{\'a}th, Kovalev, Mishchenko, Stich, and
  Richt{\'a}rik]{horvath2019stochastic}
Horv{\'a}th, S., Kovalev, D., Mishchenko, K., Stich, S., and Richt{\'a}rik, P.
\newblock Stochastic distributed learning with gradient quantization and
  variance reduction.
\newblock \emph{arXiv preprint arXiv:1904.05115}, 2019.

\bibitem[Hsu et~al.(2019)Hsu, Qi, and Brown]{hsu2019measuring}
Hsu, T.-M.~H., Qi, H., and Brown, M.
\newblock Measuring the effects of non-identical data distribution for
  federated visual classification.
\newblock \emph{arXiv preprint arXiv:1909.06335}, 2019.

\bibitem[Jin et~al.(2020)Jin, Huang, He, Dai, and Wu]{jin2020stochastic}
Jin, R., Huang, Y., He, X., Dai, H., and Wu, T.
\newblock Stochastic-sign sgd for federated learning with theoretical
  guarantees.
\newblock \emph{arXiv preprint arXiv:2002.10940}, 2020.

\bibitem[Karimireddy et~al.(2019)Karimireddy, Rebjock, Stich, and
  Jaggi]{karimireddy2019error}
Karimireddy, S.~P., Rebjock, Q., Stich, S., and Jaggi, M.
\newblock Error feedback fixes sign{SGD} and other gradient compression
  schemes.
\newblock In \emph{International Conference on Machine Learning}, pp.\
  3252--3261, 2019.

\bibitem[Lee et~al.(2020)Lee, Sarwar, Panda, Srinivasan, and
  Roy]{lee2020enabling}
Lee, C., Sarwar, S.~S., Panda, P., Srinivasan, G., and Roy, K.
\newblock Enabling spike-based backpropagation for training deep neural network
  architectures.
\newblock \emph{Frontiers in neuroscience}, 14, 2020.

\bibitem[Li et~al.(2018)Li, Sahu, Zaheer, Sanjabi, Talwalkar, and
  Smith]{li2018federated}
Li, T., Sahu, A.~K., Zaheer, M., Sanjabi, M., Talwalkar, A., and Smith, V.
\newblock Federated optimization in heterogeneous networks.
\newblock \emph{arXiv preprint arXiv:1812.06127}, 2018.

\bibitem[Li et~al.(2020)Li, Kovalev, Qian, and Richtarik]{li2020acceleration}
Li, Z., Kovalev, D., Qian, X., and Richtarik, P.
\newblock Acceleration for compressed gradient descent in distributed and
  federated optimization.
\newblock In \emph{International Conference on Machine Learning}, pp.\
  5895--5904. PMLR, 2020.

\bibitem[Lin et~al.(2018)Lin, Han, Mao, Wang, and Dally]{lin2018deep}
Lin, Y., Han, S., Mao, H., Wang, Y., and Dally, B.
\newblock Deep gradient compression: Reducing the communication bandwidth for
  distributed training.
\newblock In \emph{International Conference on Learning Representations}, 2018.

\bibitem[Liu et~al.(2020)Liu, Li, Tang, and Yan]{liu2020double}
Liu, X., Li, Y., Tang, J., and Yan, M.
\newblock A double residual compression algorithm for efficient distributed
  learning.
\newblock In \emph{International Conference on Artificial Intelligence and
  Statistics}, pp.\  133--143. PMLR, 2020.

\bibitem[McMahan et~al.(2017)McMahan, Moore, Ramage, Hampson, and
  y~Arcas]{mcmahan2017communication}
McMahan, B., Moore, E., Ramage, D., Hampson, S., and y~Arcas, B.~A.
\newblock Communication-efficient learning of deep networks from decentralized
  data.
\newblock In \emph{Artificial Intelligence and Statistics}, pp.\  1273--1282,
  2017.

\bibitem[Mishchenko et~al.(2019)Mishchenko, Gorbunov, Tak{\'a}{\v{c}}, and
  Richt{\'a}rik]{mishchenko2019distributed}
Mishchenko, K., Gorbunov, E., Tak{\'a}{\v{c}}, M., and Richt{\'a}rik, P.
\newblock Distributed learning with compressed gradient differences.
\newblock \emph{arXiv preprint arXiv:1901.09269}, 2019.

\bibitem[Philippenko \& Dieuleveut(2021)Philippenko and
  Dieuleveut]{philippenko2020bidirectional}
Philippenko, C. and Dieuleveut, A.
\newblock Bidirectional compression in heterogeneous settings for distributed
  or federated learning with partial participation: tight convergence
  guarantees.
\newblock \emph{arXiv preprint arXiv:2006.14591v3}, 2021.

\bibitem[Reisizadeh et~al.(2020)Reisizadeh, Mokhtari, Hassani, Jadbabaie, and
  Pedarsani]{pmlr-v108-reisizadeh20a}
Reisizadeh, A., Mokhtari, A., Hassani, H., Jadbabaie, A., and Pedarsani, R.
\newblock Fedpaq: A communication-efficient federated learning method with
  periodic averaging and quantization.
\newblock In \emph{Proceedings of the Twenty Third International Conference on
  Artificial Intelligence and Statistics}, pp.\  2021--2031. PMLR, Aug 2020.

\bibitem[Rothchild et~al.(2020)Rothchild, Panda, Ullah, Ivkin, Stoica,
  Braverman, Gonzalez, and Arora]{rothchild2020fetchsgd}
Rothchild, D., Panda, A., Ullah, E., Ivkin, N., Stoica, I., Braverman, V.,
  Gonzalez, J., and Arora, R.
\newblock Fetchsgd: Communication-efficient federated learning with sketching.
\newblock In \emph{International Conference on Machine Learning}, pp.\
  8253--8265. PMLR, 2020.

\bibitem[Safaryan \& Richt{\'a}rik(2021)Safaryan and
  Richt{\'a}rik]{safaryan2021stochastic}
Safaryan, M. and Richt{\'a}rik, P.
\newblock Stochastic sign descent methods: New algorithms and better theory.
\newblock In \emph{International Conference on Machine Learning}, pp.\
  9224--9234. PMLR, 2021.

\bibitem[Sahu et~al.(2021)Sahu, Dutta, M~Abdelmoniem, Banerjee, Canini, and
  Kalnis]{sahu2021rethinking}
Sahu, A., Dutta, A., M~Abdelmoniem, A., Banerjee, T., Canini, M., and Kalnis,
  P.
\newblock Rethinking gradient sparsification as total error minimization.
\newblock \emph{Advances in Neural Information Processing Systems}, 34, 2021.

\bibitem[Sattler et~al.(2019{\natexlab{a}})Sattler, Wiedemann, M{\"u}ller, and
  Samek]{sattler2019robust}
Sattler, F., Wiedemann, S., M{\"u}ller, K.-R., and Samek, W.
\newblock Robust and communication-efficient federated learning from {Non}-iid
  data.
\newblock \emph{IEEE transactions on neural networks and learning systems},
  31\penalty0 (9):\penalty0 3400--3413, 2019{\natexlab{a}}.

\bibitem[Sattler et~al.(2019{\natexlab{b}})Sattler, Wiedemann, M{\"u}ller, and
  Samek]{sattler2019sparse}
Sattler, F., Wiedemann, S., M{\"u}ller, K.-R., and Samek, W.
\newblock Sparse binary compression: Towards distributed deep learning with
  minimal communication.
\newblock In \emph{2019 International Joint Conference on Neural Networks
  (IJCNN)}, pp.\  1--8. IEEE, 2019{\natexlab{b}}.

\bibitem[Seide et~al.(2014)Seide, Fu, Droppo, Li, and Yu]{seide20141}
Seide, F., Fu, H., Droppo, J., Li, G., and Yu, D.
\newblock 1-bit stochastic gradient descent and its application to
  data-parallel distributed training of speech dnns.
\newblock In \emph{Fifteenth Annual Conference of the International Speech
  Communication Association}, 2014.

\bibitem[Shlezinger et~al.(2020)Shlezinger, Chen, Eldar, Poor, and
  Cui]{shlezinger2020uveqfed}
Shlezinger, N., Chen, M., Eldar, Y.~C., Poor, H.~V., and Cui, S.
\newblock Uveqfed: Universal vector quantization for federated learning.
\newblock \emph{IEEE Transactions on Signal Processing}, 69:\penalty0 500--514,
  2020.

\bibitem[Stich(2019)]{stich2018local}
Stich, S.~U.
\newblock Local {SGD} converges fast and communicates little.
\newblock In \emph{International Conference on Learning Representations}, 2019.
\newblock URL \url{https://openreview.net/forum?id=S1g2JnRcFX}.

\bibitem[Stich et~al.(2018)Stich, Cordonnier, and Jaggi]{stich2018sparsified}
Stich, S.~U., Cordonnier, J.~B., and Jaggi, M.
\newblock Sparsified {SGD} with memory.
\newblock In \emph{Advances in Neural Information Processing Systems}, pp.\
  4447--4458, 2018.

\bibitem[Tang et~al.(2019)Tang, Yu, Lian, Zhang, and
  Liu]{tang2019doublesqueeze}
Tang, H., Yu, C., Lian, X., Zhang, T., and Liu, J.
\newblock Doublesqueeze: Parallel stochastic gradient descent with double-pass
  error-compensated compression.
\newblock In \emph{International Conference on Machine Learning}, pp.\
  6155--6165. PMLR, 2019.

\bibitem[Wang et~al.(2018)Wang, Sievert, Liu, Charles, Papailiopoulos, and
  Wright]{wang2018atomo}
Wang, H., Sievert, S., Liu, S., Charles, Z., Papailiopoulos, D., and Wright, S.
\newblock {ATOMO}: Communication-efficient learning via atomic sparsification.
\newblock In \emph{Advances in Neural Information Processing Systems}, pp.\
  9850--9861, 2018.

\bibitem[Wangni et~al.(2018)Wangni, Wang, Liu, and Zhang]{wangni2018gradient}
Wangni, J., Wang, J., Liu, J., and Zhang, T.
\newblock Gradient sparsification for communication-efficient distributed
  optimization.
\newblock \emph{Advances in Neural Information Processing Systems}, 31, 2018.

\bibitem[Wen et~al.(2017)Wen, Xu, Yan, Wu, Wang, Chen, and Li]{wen2017terngrad}
Wen, W., Xu, C., Yan, F., Wu, C., Wang, Y., Chen, Y., and Li, H.
\newblock Tern{G}rad: Ternary gradients to reduce communication in distributed
  deep learning.
\newblock In \emph{Advances in neural information processing systems}, pp.\
  1509--1519, 2017.

\bibitem[Wu et~al.(2018)Wu, Huang, Huang, and Zhang]{wu2018error}
Wu, J., Huang, W., Huang, J., and Zhang, T.
\newblock Error compensated quantized {SGD} and its applications to large-scale
  distributed optimization.
\newblock In \emph{International Conference on Machine Learning}, pp.\
  5325--5333, 2018.

\bibitem[Zhang et~al.(2022)Zhang, Chen, Hong, Wu, and
  Yi]{zhang2022understanding}
Zhang, X., Chen, X., Hong, M., Wu, S., and Yi, J.
\newblock Understanding clipping for federated learning: Convergence and
  client-level differential privacy.
\newblock In \emph{International Conference on Machine Learning}, pp.\
  26048--26067. PMLR, 2022.

\bibitem[Zheng et~al.(2019)Zheng, Huang, and Kwok]{zheng2019communication}
Zheng, S., Huang, Z., and Kwok, J.
\newblock Communication-efficient distributed blockwise momentum sgd with
  error-feedback.
\newblock \emph{Advances in Neural Information Processing Systems},
  32:\penalty0 11450--11460, 2019.

\end{thebibliography}
\bibliographystyle{icml2021}

\newpage
\appendix
\onecolumn
\setlength{\abovedisplayskip}{2pt}
\setlength{\belowdisplayskip}{2pt}
\section{Proofs}\label{Proofs}
\setcounter{theorem}{0}
\setcounter{Corollary}{0}
\subsection{Proof of Theorem \ref{SupT1}}
\begin{theorem}\label{SupT1}
Let $u_{1},u_{2},\cdots,u_{M}$ be $M$ known and fixed real numbers with $\frac{1}{M}\sum_{m=1}^{M}u_{m} \neq 0$, and consider random variables $\hat{u}_{m} \in \{-1,0,1\}$ {\color{black}(which is the compressed version of $u_{m}$)}, $1\leq m \leq M$. Denote $P(\hat{u}_{m} = -sign(\frac{1}{M}\sum_{m=1}^{M}u_{m}))=p_{m}$, $P(\hat{u}_{m} = sign(\frac{1}{M}\sum_{m=1}^{M}u_{m}))=q_{m}$ and $P(\hat{u}_{m} = 0)=1-p_{m}-q_{m}$. Let $\Bar{p} = \frac{1}{M}\sum_{m=1}^{M}p_{m}$ and $\Bar{q} = \frac{1}{M}\sum_{m=1}^{M}q_{m}$. If $\Bar{q} > \Bar{p}$, {\color{black}the probability of wrong aggregation is given by}
\begin{equation}\label{SUPPProbabilityOfError}
\begin{split}
P\bigg(sign\bigg(\frac{1}{M}\sum_{m=1}^{M}\hat{u}_{m}\bigg)&\neq sign\bigg(\frac{1}{M}\sum_{m=1}^{M}u_{m}\bigg)\bigg) \leq [1-(\sqrt{\Bar{q}}-\sqrt{\Bar{p}})^2]^{M}.
\end{split}
\end{equation}
\end{theorem}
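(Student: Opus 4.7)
The plan is to use a Chernoff--Cram\'er exponential-moment argument combined with the AM--GM inequality. By symmetry (replacing $\hat{u}_m$ with $-\hat{u}_m$ if necessary) I may assume without loss of generality that $s := \mathrm{sign}\bigl(\tfrac{1}{M}\sum_{m=1}^M u_m\bigr) = +1$, so the wrong-aggregation event reduces to $\bigl\{\sum_{m=1}^M \hat{u}_m \leq 0\bigr\}$. Using independence of the $\hat{u}_m$ across workers (implicit from their separate compression), for any $t > 0$ Markov's inequality applied to $e^{-t\sum_m \hat{u}_m}$ gives
\[
P\Bigl(\sum_{m=1}^M \hat{u}_m \leq 0\Bigr) \;\leq\; \prod_{m=1}^M\bigl(p_m e^{t} + (1 - p_m - q_m) + q_m e^{-t}\bigr).
\]

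Next I apply AM--GM in the form $\prod_{m=1}^M a_m \leq \bigl(\tfrac{1}{M}\sum_{m=1}^M a_m\bigr)^{M}$ (valid since each factor is a probability and hence nonnegative) to collapse the heterogeneous product into a single power involving only the averages $\bar{p}$ and $\bar{q}$:
\[
P\Bigl(\sum_{m=1}^M \hat{u}_m \leq 0\Bigr) \;\leq\; \bigl(1 + \bar{p}(e^{t} - 1) + \bar{q}(e^{-t} - 1)\bigr)^{M}.
\]
Finally, I optimize the base over $t > 0$. Differentiating $\bar{p}e^{t} + \bar{q}e^{-t}$ gives the minimizer $t^{\ast} = \tfrac{1}{2}\log(\bar{q}/\bar{p})$, which is strictly positive precisely because the hypothesis $\bar{q} > \bar{p}$ holds. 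At $t^{\ast}$ the sum evaluates to $2\sqrt{\bar{p}\bar{q}}$, so the base becomes $1 - \bar{p} - \bar{q} + 2\sqrt{\bar{p}\bar{q}} = 1 - (\sqrt{\bar{q}} - \sqrt{\bar{p}})^{2}$, yielding the claimed bound in (\ref{SUPPProbabilityOfError}).

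The main obstacle I anticipate is the AM--GM step: a per-worker optimization of $t$ would be tighter but would not produce a clean bound in terms of $\bar{p},\bar{q}$ alone, so it is essential that AM--GM legitimately aggregates across heterogeneous workers before the single global $t^{\ast}$ is chosen. The hypothesis $\bar{q} > \bar{p}$ is what guarantees $t^{\ast} > 0$, which is required for the Markov step to be valid in the direction used. A minor subtlety is the convention for $\mathrm{sign}(0)$: the argument treats the boundary event $\sum_m \hat{u}_m = 0$ as a wrong aggregation, which only strengthens the bound and is consistent with the statement.
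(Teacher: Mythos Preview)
Your proposal is correct and follows essentially the same approach as the paper: a Chernoff bound via Markov's inequality on the exponential moment, followed by collapsing the heterogeneous product into a power of the average (you call this AM--GM, the paper phrases it as Jensen's inequality applied to $\ln$, which is the same inequality), and then optimizing the free parameter to obtain $t^{\ast}=\tfrac12\log(\bar q/\bar p)$. The only cosmetic difference is that the paper introduces auxiliary variables $X_m\in\{-1,0,1\}$ recording correct/zero/wrong sign, whereas you work directly with $\hat u_m$ after the WLOG reduction; the resulting computations are identical.
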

\begin{proof}
Define a series of random variables $\{X_m\}_{m=1}^{M}$ given by
\begin{equation}
X_{m} =
\begin{cases}
\hfill 1, \hfill &\text{if $\hat{u}_{m} \neq sign\bigg(\frac{1}{M}\sum_{m=1}^{M}u_{m}\bigg)$},\\
\hfill 0, \hfill &\text{if $\hat{u}_{m} = 0$},\\
\hfill -1, \hfill &\text{if $\hat{u}_{m} = sign\bigg(\frac{1}{M}\sum_{m=1}^{M}u_{m}\bigg)$.}
\end{cases}
\end{equation}

Then, we have
\begin{equation}\label{eq1}
\begin{split}
&P\bigg(sign\bigg(\sum_{m=1}^{M}\hat{u}_{m}\bigg)\neq sign\bigg(\sum_{m=1}^{M}u_{m}\bigg)\bigg) = P\left(\sum_{m=1}^{M}X_{m} \geq 0\right).
\end{split}
\end{equation}

For any variable $a > 0$, we have
\begin{equation}
\begin{split}
P\left(\sum_{m=1}^{M}X_m \geq 0\right) &= P\left(e^{a\sum_{m=1}^{M}X_m} \geq e^{0}\right) \leq \frac{\mathbb{E}[e^{a\sum_{m=1}^{M}X_m}]}{e^{0}} = \mathbb{E}[e^{a\sum_{m=1}^{M}X_m}],
\end{split}
\end{equation}
which is due to Markov's inequality, given the fact that $e^{a\sum_{m=1}^{M}X_m}$ is non-negative. For the ease of presentation, let $P(X_{m} = 1) = p_{m}$ and $P(X_{m} = -1) = q_{m}$, we have,
\begin{equation}
\begin{split}
 \mathbb{E}[e^{a\sum_{m=1}^{M}X_m}] &= e^{\ln(\mathbb{E}[e^{a\sum_{m=1}^{M}X_m}])}= e^{\ln(\prod_{m=1}^{M}\mathbb{E}[e^{aX_m}])} = e^{\sum_{m=1}^{M}\ln(\mathbb{E}[e^{aX_m}])} \\
 &=e^{\sum_{m=1}^{M}\ln(e^{a}p_{m}+e^{-a}q_{m}+(1-p_{m}-q_{m}))} \\
 &=e^{M\left(\frac{1}{M}\sum_{m=1}^{M}\ln(e^{a}p_{m}+e^{-a}q_{m}+(1-p_{m}-q_{m}))\right)}\\
 &\leq e^{M\ln(e^{a}\Bar{p}+e^{-a}\Bar{q} + (1-\Bar{p}-\Bar{q}))},
\end{split}
\end{equation}
where $\Bar{p} = \frac{1}{M}\sum_{m=1}^{M}p_{m}$, $\Bar{q} = \frac{1}{M}\sum_{m=1}^{M}q_{m}$ and the inequality is due to Jensen's inequality. Suppose that $\Bar{q} > \Bar{p}$ and let  $a=\ln\left(\sqrt{\frac{\Bar{q}}{\Bar{p}}}\right) > 0$, then
\begin{equation}
\begin{split}
e^{M\ln(e^{a}\Bar{p}+e^{-a}\Bar{q} + (1-\Bar{p}-\Bar{q}))} &= [2\sqrt{\Bar{p}\Bar{q}}+(1-\Bar{p}-\Bar{q})]^{M} = [1-(\sqrt{\Bar{q}}-\sqrt{\Bar{p}})^2]^{M},
\end{split}
\end{equation}
which completes the proof.
\end{proof}

\subsection{Proof of Corollary \ref{SupC1}}
\begin{Corollary}\label{SupC1}
Given the same $u_{1},u_{2},\cdots,u_{M}$ as in Theorem \ref{Theorem1} and consider random variables $\hat{u}_{m}=sparsign(u_{m},B_{m})$, $1\leq m \leq M$. Let $\mathcal{A}$ and $\mathcal{A}^{c}$ denote the sets of workers such that $u_{m} \neq sign(\frac{1}{M}\sum_{m=1}^{M}u_{m}), \forall m \in \mathcal{A}$ and $u_{m} = sign(\frac{1}{M}\sum_{m=1}^{M}u_{m}), \forall m \in \mathcal{A}^{c}$, respectively. Let $S$ denote the set of selected workers, then $\Bar{p} = \frac{1}{M}\sum_{m \in \mathcal{A}}|u_{m}B_{m}P(m \in S)|$ and $\Bar{q} = \frac{1}{M}\sum_{m \in \mathcal{A}^{c}}|u_{m}B_{m}P(m \in S)|$ in (\ref{ProbabilityOfError}).
\end{Corollary}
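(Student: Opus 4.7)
The plan is to reduce Corollary \ref{SupC1} to a direct calculation of the marginals $p_m$ and $q_m$ used in Theorem \ref{Theorem1}, by unpacking the definition of $sparsign$ together with the convention that $\hat{u}_m = 0$ whenever worker $m$ is not selected (as described in Algorithm \ref{QuantizedSIGNSGD} and the discussion of Section \ref{ternarysection}). First I would fix an arbitrary worker $m$ and condition on the event $\{m \in S\}$: if $m \notin S$ then $\hat{u}_m = 0$ deterministically, so this case contributes nothing to $p_m$ or $q_m$; if $m \in S$ then by Definition \ref{definitioncompressor} the variable $\hat{u}_m$ equals $sign(u_m)$ with probability $|u_m|B_m$ and equals $0$ otherwise.

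I would then split on whether $m \in \mathcal{A}$ or $m \in \mathcal{A}^c$. For $m \in \mathcal{A}$ we have $sign(u_m) = -sign(\frac{1}{M}\sum_k u_k)$, so the only route to producing the outcome $\hat{u}_m = -sign(\frac{1}{M}\sum_k u_k)$ is that $m$ is selected and the sparsifier fires, giving $p_m = P(m \in S)\,|u_m|B_m$ and $q_m = 0$. The case $m \in \mathcal{A}^c$ is symmetric and yields $q_m = P(m \in S)\,|u_m|B_m$ with $p_m = 0$. Averaging over all $m$ then produces exactly the stated expressions for $\bar{p}$ and $\bar{q}$, after which the bound in Theorem \ref{Theorem1} is inherited by substitution into \eqref{SUPPProbabilityOfError}.

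There is no serious obstacle here, as the argument is pure bookkeeping; the one detail worth checking is that a worker with $u_m = 0$ may be placed in either $\mathcal{A}$ or $\mathcal{A}^c$ without affecting the sums, since the factor $|u_m|B_m$ annihilates its contribution. The independence assumptions tacitly required to invoke Theorem \ref{Theorem1} are inherited from the independence of the sampling indicator $\mathds{1}_{\{m \in S\}}$ and the internal randomness of $sparsign$ across workers.
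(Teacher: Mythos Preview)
Your proposal is correct and follows essentially the same route as the paper: compute each worker's marginals $p_m$ and $q_m$ by conditioning on selection and unpacking the definition of $sparsign$, then average over $m$ partitioned into $\mathcal{A}$ and $\mathcal{A}^c$. The paper's proof is a terser version of the same bookkeeping (it writes out $p_m$ and $q_m$ as piecewise functions of the sign of $u_m$ under an implicit WLOG on $sign(\tfrac{1}{M}\sum_k u_k)$ and then says ``with some algebra''); your handling of the $u_m=0$ edge case and the independence hypothesis is, if anything, more explicit than the original.
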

\begin{proof}
For $Q(\cdot)$ given by (\ref{DefinitionofSparsification3}), we have
\begin{equation}
p_{m} =
\begin{cases}
\hfill 0, \hfill &\text{if $u_{m}\leq0$},\\
\hfill u_{m}B_{m}\Pr(m \in S), \hfill &\text{if $u_{m} > 0$.}
\end{cases}
\end{equation}
\begin{equation}
q_{m} =
\begin{cases}
\hfill -u_{m}B_{m}\Pr(m \in S), \hfill &\text{if $u_{m}<0$},\\
\hfill 0, \hfill &\text{if $u_{m} \geq 0$.}
\end{cases}
\end{equation}
With some algebra, we can show that $\Bar{p} = \frac{1}{M}\sum_{m \in \mathcal{A}}|u_{m}B_{m}\Pr(m \in S)|$ and $\Bar{q} = \frac{1}{M}\sum_{m \in \mathcal{A}^{c}}|u_{m}B_{m}\Pr(m \in S)|$.
\end{proof}

\subsection{Proof of Theorem \ref{SPconvergerate}}
\begin{theorem}\label{SPconvergerate}
Suppose Assumptions \ref{A1}, \ref{A2} and \ref{A4} are satisfied, and the learning rate is set as $\eta=\frac{1}{\sqrt{Td}}$. Then by running Algorithm \ref{QuantizedSIGNSGD} (termed {\scriptsize SPARSIGN}SGD) with $Q(\boldsymbol{g}_{m}^{(t)},\boldsymbol{B}^{(t)}_{m}) = sparsign(\boldsymbol{g}_{m}^{(t)},\boldsymbol{B}^{(t)}_{m}))$, where $\boldsymbol{B}^{(t)}_{m,i} = B_{i}^{(t)}$, $\forall m$, and $\mathcal{C}(\cdot) = sign(\cdot)$ for $T$ iterations, we have
\begin{equation}
\color{black}
\begin{split}
\frac{1}{T}\sum_{t=1}^{T}\sum_{i=1}^{d}(1-2\rho_{i})(1-2\kappa_{i}^{(t)})|\nabla F(w^{(t)})_{i}| &\leq
\frac{\mathbb{E}[F(w^{(0)}) - F(w^{(T+1)})]\sqrt{d}}{\sqrt{T}} + \frac{L\sqrt{d}}{2\sqrt{T}}\\
&\leq\frac{(F(w^{(0)}) - F^{*})\sqrt{d}}{\sqrt{T}} + \frac{L\sqrt{d}}{2\sqrt{T}},
\end{split}
\end{equation}
where $\kappa_{m,i}^{(t)} = \mathbb{E}\left[\left[1-B_{i}^{(t)}p_{s}\left(\frac{|\frac{1}{M}\sum_{m=1}^{M}\boldsymbol{g}_{m,i}^{(t)}|}{\sqrt{\frac{1}{M}\sum_{m \in \mathcal{A}^{c}_{(t)}}|\boldsymbol{g}_{m,i}^{(t)}|}+\sqrt{\frac{1}{M}\sum_{m \in \mathcal{A}_{(t)}}|\boldsymbol{g}_{m,i}^{(t)}|}}\right)^2\right]^{M}\right]$, $\mathcal{A}_{(t)}$ and $\mathcal{A}^{c}_{(t)}$ are the set of workers such that $sign\left(\boldsymbol{g}_{m,i}^{(t)}\right) \neq sign\left(\frac{1}{M}\sum_{m=1}^{M}\boldsymbol{g}_{m,i}^{(t)}\right)$ and $sign\left(\boldsymbol{g}_{m,i}^{(t)}\right) = sign\left(\frac{1}{M}\sum_{m=1}^{M}\boldsymbol{g}_{m,i}^{(t)}\right)$, respectively, and the expectation is over the randomness of the gradients $\boldsymbol{g}_{m}^{(t)}$.
\end{theorem}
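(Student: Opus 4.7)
The plan follows the classical {\scriptsize SIGN}SGD-style descent-lemma argument, adapted to the ternary compressor setting. First, applying the $L$-smoothness in Assumption \ref{A2} to the update rule $w^{(t+1)} = w^{(t)} - \eta\tilde{\boldsymbol{g}}^{(t)}$ yields
\[
F(w^{(t+1)}) \leq F(w^{(t)}) - \eta\langle \nabla F(w^{(t)}),\tilde{\boldsymbol{g}}^{(t)}\rangle + \frac{L\eta^{2}}{2}\|\tilde{\boldsymbol{g}}^{(t)}\|_{2}^{2}.
\]
Because $\mathcal{C}(\cdot)=sign(\cdot)$, the aggregated update $\tilde{\boldsymbol{g}}^{(t)}$ lives in $\{-1,0,1\}^{d}$, so $\|\tilde{\boldsymbol{g}}^{(t)}\|_{2}^{2}\leq d$ deterministically and the quadratic term is at most $L\eta^{2}d/2$. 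Only the linear term now needs probabilistic control.

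Next, write $\nabla F(w^{(t)})_{i} = \xi_{i}^{(t)}|\nabla F(w^{(t)})_{i}|$ with $\xi_{i}^{(t)} := sign(\nabla F(w^{(t)})_{i})$, so that
\[
\mathbb{E}\langle \nabla F(w^{(t)}),\tilde{\boldsymbol{g}}^{(t)}\rangle = \sum_{i=1}^{d}|\nabla F(w^{(t)})_{i}|\,\mathbb{E}[\xi_{i}^{(t)}\tilde{\boldsymbol{g}}_{i}^{(t)}].
\]
I would bound each coordinate's expected signed product by a two-level conditioning argument. Introduce the auxiliary sign $\eta_{i}^{(t)} := sign\bigl(\tfrac{1}{M}\sum_{m=1}^{M}\boldsymbol{g}_{m,i}^{(t)}\bigr)$ and factor $\xi_{i}^{(t)}\tilde{\boldsymbol{g}}_{i}^{(t)} = (\xi_{i}^{(t)}\eta_{i}^{(t)})\cdot(\eta_{i}^{(t)}\tilde{\boldsymbol{g}}_{i}^{(t)})$, then apply the tower rule by conditioning first on the realized stochastic gradients $g^{(t)} = \{\boldsymbol{g}_{m,i}^{(t)}\}$. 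Conditionally, $\eta_{i}^{(t)}$ is fixed and the residual randomness comes only from the $sparsign$ compressor and worker sampling, so Theorem \ref{SupT1} together with Corollary \ref{SupC1} gives $\mathbb{P}(\tilde{\boldsymbol{g}}_{i}^{(t)}\neq\eta_{i}^{(t)}\mid g^{(t)}) \leq \kappa_{i}^{(t)}(g^{(t)})$, where $\kappa_{i}^{(t)}(g^{(t)})$ is precisely the bracketed quantity inside the outer expectation defining $\kappa_{i}^{(t)}$; equivalently, $\mathbb{E}[\eta_{i}^{(t)}\tilde{\boldsymbol{g}}_{i}^{(t)}\mid g^{(t)}] \geq 1-2\kappa_{i}^{(t)}(g^{(t)})$. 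Taking the outer expectation over $g^{(t)}$ and combining with Assumption \ref{A4}, which controls $\mathbb{P}(\xi_{i}^{(t)}\eta_{i}^{(t)}=-1)\leq\rho_{i}$, yields the lower bound $\mathbb{E}[\xi_{i}^{(t)}\tilde{\boldsymbol{g}}_{i}^{(t)}] \geq (1-2\rho_{i})(1-2\kappa_{i}^{(t)})$.

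Finally, substitute the coordinate-wise bound back into the descent inequality, take total expectation, and telescope over $t=0,\ldots,T$, producing
\[
\eta\sum_{t=1}^{T}\sum_{i=1}^{d}(1-2\rho_{i})(1-2\kappa_{i}^{(t)})|\nabla F(w^{(t)})_{i}| \leq \mathbb{E}[F(w^{(0)})-F(w^{(T+1)})] + \frac{L\eta^{2}dT}{2}.
\]
Dividing by $\eta T$ and substituting $\eta = 1/\sqrt{Td}$ gives the first claimed inequality, and the lower bound $F(w^{(T+1)})\geq F^{*}$ from Assumption \ref{A1} produces the second.

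The main obstacle is the middle step: cleanly combining two distinct sources of randomness---stochastic gradient noise (controlled by $\rho_{i}$) and the sparsification/sampling in $sparsign$ (controlled by $\kappa_{i}^{(t)}$)---into the multiplicative factor $(1-2\rho_{i})(1-2\kappa_{i}^{(t)})$ rather than a loose additive surrogate. The factorization through the auxiliary sign $\eta_{i}^{(t)}$ is what enables this, since it lets Theorem \ref{SupT1} be applied conditionally on $g^{(t)}$ (its hypothesis $\bar{q}>\bar{p}$ holds by Remark \ref{Remark3} under the chosen uniform $B_{i}^{(t)}$ and $p_{s}$), after which Assumption \ref{A4} handles the outer expectation. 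Care is needed to verify that the dependence of $\kappa_{i}^{(t)}(g^{(t)})$ on the gradient magnitudes does not spoil the product structure when combined with the sign-level event controlled by $\rho_{i}$.
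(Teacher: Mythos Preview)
Your proposal is correct and follows essentially the same route as the paper: apply smoothness to get the descent inequality, bound the quadratic term by $L\eta^{2}d/2$, decompose the linear term coordinate-wise through the auxiliary sign $\eta_{i}^{(t)}=sign(\tfrac{1}{M}\sum_{m}\boldsymbol{g}_{m,i}^{(t)})$, invoke Theorem~\ref{SupT1}/Corollary~\ref{SupC1} conditionally on $g^{(t)}$ for the $\kappa_{i}^{(t)}$ factor and Assumption~\ref{A4} for the $\rho_{i}$ factor, then telescope and substitute $\eta=1/\sqrt{Td}$. The paper phrases the middle step as bounding $P(\text{wrong aggregation})\leq \rho_{i}+(1-2\rho_{i})\kappa_{i}^{(t)}$ via the identity $\phi_{i}(1-\rho_{i})+(1-\phi_{i})\rho_{i}$, which is algebraically equivalent to your lower bound on $\mathbb{E}[\xi_{i}^{(t)}\tilde{\boldsymbol{g}}_{i}^{(t)}]$; notably, the dependence concern you flag in your last paragraph is precisely the step the paper writes as an equality without further comment.
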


The proof of Theorem \ref{SPconvergerate} follows the well-known strategy of relating the norm of the gradient to the expected improvement of the global objective in a single iteration. Then accumulating the improvement over the iterations yields the convergence rate of the algorithm.

\begin{proof}
According to Assumption \ref{A2}, we have
\begin{equation}
\begin{split}
F(w^{(t+1)}) - F(w^{(t)}) &\leq \langle\nabla F(w^{(t)}), w^{(t+1)}-w^{(t)}\rangle + \frac{L}{2}||w^{(t+1)}-w^{(t)}||^2 \\
& =-\eta \langle\nabla F(w^{(t)}), sign\bigg(\frac{1}{M}\sum_{m=1}^{M}Q(\boldsymbol{g}_{m}^{(t)},\boldsymbol{B}^{(t)}_{m})\bigg)\rangle + \frac{L}{2}\bigg|\bigg|\eta sign\bigg(\frac{1}{M}\sum_{m=1}^{M}Q(\boldsymbol{g}_{m}^{(t)},\boldsymbol{B}^{(t)}_{m})\bigg)\bigg|\bigg|^2 \\
& = -\eta \langle\nabla F(w^{(t)}), sign\bigg(\frac{1}{M}\sum_{m=1}^{M}Q(\boldsymbol{g}_{m}^{(t)},\boldsymbol{B}^{(t)}_{m})\bigg)\rangle + \frac{L\eta^2d}{2} \\
& = -\eta ||\nabla F(w^{(t)})||_{1} + \frac{L\eta^2d}{2} + 2\eta\sum_{i=1}^{d}|\nabla F(w^{(t)})_{i}|\mathds{1}_{sign(\frac{1}{M}\sum_{m=1}^{M}Q(\boldsymbol{g}_{m}^{(t)},\boldsymbol{B}^{(t)}_{m})_{i})\neq sign(\nabla F(w^{(t)})_{i})},
\end{split}
\end{equation}
where $\nabla F(w^{(t)})_{i}$ is the $i$-th entry of the vector $\nabla F(w^{(t)})$ and $\eta$ is the learning rate. Taking expectation on both sides yields

\begin{equation}\label{convergencee1}
\color{black}
\begin{split}
&\mathbb{E}[F(w^{(t+1)}) - F(w^{(t)})] \leq -\eta ||\nabla F(w^{(t)})||_{1} + \frac{L\eta^2d}{2} \\
&+2\eta\mathbb{E}\bigg[\sum_{i=1}^{d}|\nabla F(w^{(t)})_{i}|P\bigg(sign\bigg(\frac{1}{M}\sum_{m=1}^{M}Q(\boldsymbol{g}_{m}^{(t)},\boldsymbol{B}^{(t)}_{m})_{i}\bigg)\neq sign(\nabla F(w^{(t)})_{i})\bigg)\bigg]\\
\end{split}
\end{equation}

Denote $\rho_{i} = P\left(sign\left(\frac{1}{M}\sum_{m=1}^{M}\boldsymbol{g}_{m,i}^{(t)}\right) \neq sign(\nabla F(w^{(t)})_{i})\right)$ and \\
$\phi_{i} = P\left(sign\left(\frac{1}{M}\sum_{m=1}^{M}Q(\boldsymbol{g}_{m}^{(t)},\boldsymbol{B}^{(t)}_{m})_{i}\right) \neq sign\left(\frac{1}{M}\sum_{m=1}^{M}\boldsymbol{g}_{m,i}^{(t)}\right)\right)$, we have
\begin{equation}\label{probstochastic}
\begin{split}
&\mathbb{E}\bigg[P\left(sign\left(\frac{1}{M}\sum_{m=1}^{M}Q(\boldsymbol{g}_{m}^{(t)},\boldsymbol{B}^{(t)}_{m})_{i}\right)\neq sign\left(\nabla F(w^{(t)})_{i}\right)\right)\bigg] = \mathbb{E}[\phi_{i}(1-\rho_{i})+(1-\phi_{i})\rho_{i}] = \mathbb{E}[(1-2\rho_{i})\phi_{i} + \rho_{i}] \\
&\leq \rho_{i} + (1-2\rho_{i})\mathbb{E}\left[\left[1-\left(\sqrt{\frac{1}{M}\sum_{m \in \mathcal{A}^{c}_{(t)}}|\boldsymbol{g}_{m,i}^{(t)}B_{i}^{(t)}p_{s}|}-\sqrt{\frac{1}{M}\sum_{m \in \mathcal{A}_{(t)}}|\boldsymbol{g}_{m,i}^{(t)}B_{i}^{(t)}p_{s}|}\right)^2\right]^{M}\right]\\
&= \rho_{i} + (1-2\rho_{i})\mathbb{E}\left[\left[1-B_{i}^{(t)}p_{s}\left(\sqrt{\frac{1}{M}\sum_{m \in \mathcal{A}^{c}_{(t)}}|\boldsymbol{g}_{m,i}^{(t)}|}-\sqrt{\frac{1}{M}\sum_{m \in \mathcal{A}_{(t)}}|\boldsymbol{g}_{m,i}^{(t)}|}\right)^2\right]^{M}\right]\\
&= \rho_{i} + (1-2\rho_{i})\mathbb{E}\left[\left[1-B_{i}^{(t)}p_{s}\left(\frac{\frac{1}{M}\sum_{m \in \mathcal{A}^{c}_{(t)}}|\boldsymbol{g}_{m,i}^{(t)}|-\frac{1}{M}\sum_{m \in \mathcal{A}_{(t)}}|\boldsymbol{g}_{m,i}^{(t)}|}{\sqrt{\frac{1}{M}\sum_{m \in \mathcal{A}^{c}_{(t)}}|\boldsymbol{g}_{m,i}^{(t)}|}+\sqrt{\frac{1}{M}\sum_{m \in \mathcal{A}_{(t)}}|\boldsymbol{g}_{m,i}^{(t)}|}}\right)^2\right]^{M}\right] \\
&= \rho_{i} + (1-2\rho_{i})\mathbb{E}\left[\left[1-B_{i}^{(t)}p_{s}\left(\frac{|\frac{1}{M}\sum_{m=1}^{M}\boldsymbol{g}_{m,i}^{(t)}|}{\sqrt{\frac{1}{M}\sum_{m \in \mathcal{A}^{c}_{(t)}}|\boldsymbol{g}_{m,i}^{(t)}|}+\sqrt{\frac{1}{M}\sum_{m \in \mathcal{A}_{(t)}}|\boldsymbol{g}_{m,i}^{(t)}|}}\right)^2\right]^{M}\right] \\
&\triangleq  \rho_{i} + (1-2\rho_{i})\kappa_{i}^{(t)}.
\end{split}
\end{equation}
where $\mathcal{A}_{(t)}$ and $\mathcal{A}^{c}_{(t)}$ are the set of workers such that $sign\left(\boldsymbol{g}_{m,i}^{(t)}\right) \neq sign\left(\frac{1}{M}\sum_{m=1}^{M}\boldsymbol{g}_{m,i}^{(t)}\right)$ and $sign\left(\boldsymbol{g}_{m,i}^{(t)}\right) = sign\left(\frac{1}{M}\sum_{m=1}^{M}\boldsymbol{g}_{m,i}^{(t)}\right)$, respectively.

Plugging (\ref{probstochastic}) into (\ref{convergencee1}), we can obtain
\begin{equation}
\color{black}
\begin{split}
\mathbb{E}[F(w^{(t+1)}) - F(w^{(t)})] &\leq -\eta ||\nabla F(w^{(t)})||_{1} + \frac{L\eta^2d}{2}  + 2\eta\sum_{i=1}^{d}|\nabla F(w^{(t)})_{i}|[ \rho_{i} + (1-2\rho_{i})\kappa_{i}^{(t)}].\\
&\leq -\eta \sum_{i=1}^{d}(1-2\rho_{i})(1-2\kappa_{i}^{(t)})|\nabla F(w^{(t)})_{i}| + \frac{L\eta^2d}{2}
\end{split}
\end{equation}
Adjusting the above inequality and averaging both sides over $t=1,2,\cdots,T$, we can obtain
\begin{equation}
\color{black}
\begin{split}
\frac{1}{T}\sum_{t=1}^{T}\sum_{i=1}^{d}\eta(1-2\rho_{i})(1-2\kappa_{i}^{(t)})|\nabla F(w^{(t)})_{i}| &\leq \frac{\mathbb{E}[F(w^{(0)}) - F(w^{(T+1)})]}{T} + \frac{L\eta^2d}{2}\\
\end{split}
\end{equation}
Letting $\eta=\frac{1}{\sqrt{dT}}$ and dividing both sides by $\eta$ gives
\begin{equation}
\color{black}
\begin{split}
\frac{1}{T}\sum_{t=1}^{T}\sum_{i=1}^{d}(1-2\rho_{i})(1-2\kappa_{i}^{(t)})|\nabla F(w^{(t)})_{i}| &\leq
\frac{\mathbb{E}[F(w^{(0)}) - F(w^{(T+1)})]\sqrt{d}}{\sqrt{T}} + \frac{L\sqrt{d}}{2\sqrt{T}}\\
&\leq\frac{(F(w^{(0)}) - F^{*})\sqrt{d}}{\sqrt{T}} + \frac{L\sqrt{d}}{2\sqrt{T}},
\end{split}
\end{equation}
which completes the proof.
\end{proof}

\subsection{Proof of Theorem \ref{SuppEFDPSIGNConvergence2}}
\begin{theorem}\label{SuppEFDPSIGNConvergence2}
When Assumptions \ref{A1}, \ref{A2} and \ref{A3} are satisfied, by running Algorithm \ref{QuantizedSIGNSGDLocal} with $\eta_{L} = \frac{1}{\sqrt{Td}\tau}$, $\eta = \tau$, $\boldsymbol{B}_{m}^{(t,c)} = B_{l}^{(t)}\cdot\boldsymbol{1}$, $\boldsymbol{B}^{(t)}_{m} = B_{g}^{(t)}\cdot\boldsymbol{1} \forall m$, $B_{l}^{(t)}B_{g}^{(t)} = B$, we have
\begin{equation}
\begin{split}
\frac{1}{T}\sum_{t=0}^{T-1}||\nabla F(w^{(t)})||^2 \leq \frac{(F(w^{(0)})-F^{*})\sqrt{d}}{B\tau\sqrt{T}} + \frac{(1+L+L^2\beta)\sqrt{d}}{B\tau\sqrt{T}}+ \frac{L^2(\tau+1)(2\tau+1)}{6T\tau^2}.
\end{split}
\end{equation}
\end{theorem}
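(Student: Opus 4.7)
The plan is to adapt the virtual-iterate / error-feedback analysis of \cite{karimireddy2019error} to accommodate both magnitude-aware sparsification and $\tau$ local steps. The key preliminary observation is that $sparsign$ is unbiased up to a known scalar: conditional on its input $x$, $\mathbb{E}[sparsign(x, B\cdot\boldsymbol{1})] = Bx$, while its output lies in $\{-1,0,1\}^{d}$ and hence has $\ell_{2}$-norm at most $\sqrt{d}$ almost surely. Combined with Assumption~\ref{A3}, this lets us take nested conditional expectations cleanly and yields a deterministic per-step norm bound on each local compressed update, which is what lets us avoid the usual bounded gradient-dissimilarity assumption.

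First I would introduce the virtual sequence $\tilde{w}^{(t)} := w^{(t)} - \eta\eta_{L}\,\tilde{\boldsymbol{e}}^{(t)}$. A direct substitution using the residual update \eqref{residualupdate} gives
\begin{equation*}
\tilde{w}^{(t+1)} = \tilde{w}^{(t)} - \eta\eta_{L}\,p^{(t)}, \qquad p^{(t)} := \frac{1}{|S^{(t)}|}\sum_{m\in S^{(t)}}\Delta_{m}^{(t)},
\end{equation*}
so the virtual iterate is driven by the uncompressed aggregate $p^{(t)}$ rather than by $\tilde{\boldsymbol{g}}^{(t)}$. Peeling conditional expectations (server-side $sparsign$, then the $\tau$ local $sparsign$ calls, then the unbiased stochastic gradients, and finally uniform worker sampling with probability $p_{s}$) yields
\begin{equation*}
\mathbb{E}[p^{(t)}\mid w^{(t)}] = B\sum_{c=0}^{\tau-1}\frac{1}{M}\sum_{m=1}^{M}\mathbb{E}[\nabla f_{m}(w_{m}^{(t,c)})\mid w^{(t)}].
\end{equation*}
Applying $L$-smoothness of $F$ to the virtual sequence and substituting this identity produces a descent inequality whose negative term is $-\eta\eta_{L}B\tau\,\|\nabla F(w^{(t)})\|^{2}$, plus three remainders: $R_{1}$ absorbs $\nabla F(\tilde{w}^{(t)})-\nabla F(w^{(t)})$ and is controlled by $L\eta\eta_{L}\|\tilde{\boldsymbol{e}}^{(t)}\|$; $R_{2}$ absorbs $\nabla f_{m}(w_{m}^{(t,c)})-\nabla f_{m}(w^{(t)})$ and is controlled by $L\|w_{m}^{(t,c)}-w^{(t)}\|\le L\eta_{L}c\sqrt{d}$ (deterministic, since each local step has norm at most $\sqrt{d}$); and a quadratic term $\tfrac{L}{2}\eta^{2}\eta_{L}^{2}\,\mathbb{E}\|p^{(t)}\|^{2}$ is bounded via $\|\Delta_{m}^{(t)}\|^{2}\le d$ a.s.

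To close the loop on $R_{1}$, I would bound $\mathbb{E}\|\tilde{\boldsymbol{e}}^{(t)}\|^{2}$ by writing $\tilde{\boldsymbol{e}}^{(t+1)} = (p^{(t)}+\tilde{\boldsymbol{e}}^{(t)}) - \mathcal{C}(p^{(t)}+\tilde{\boldsymbol{e}}^{(t)})$, invoking the $\alpha$-approximation property of $\mathcal{C}$, and applying Young's inequality with weight $\gamma = \alpha/(2-\alpha)$ to obtain
\begin{equation*}
\mathbb{E}\|\tilde{\boldsymbol{e}}^{(t+1)}\|^{2} \le \left(1-\tfrac{\alpha}{2}\right)\mathbb{E}\|\tilde{\boldsymbol{e}}^{(t)}\|^{2} + \tfrac{2(1-\alpha)}{\alpha}\,\mathbb{E}\|p^{(t)}\|^{2}.
\end{equation*}
Unrolling with $\mathbb{E}\|p^{(t)}\|^{2}\le d$ gives $\sup_{t}\mathbb{E}\|\tilde{\boldsymbol{e}}^{(t)}\|^{2}\le \beta d$ for $\beta = O(1/\alpha^{2})$, which after substitution produces the $L^{2}\beta$ factor in the statement.

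Finally I would telescope the descent inequality over $t=0,\dots,T-1$, plug in $\eta_{L}=1/(\sqrt{Td}\,\tau)$ and $\eta=\tau$ so that $\eta\eta_{L}=1/\sqrt{Td}$ and $\eta\eta_{L}B\tau = B\tau/\sqrt{Td}$, divide through by this coefficient, and collect terms. The initial-objective contribution becomes $\frac{(F(w^{(0)})-F^{*})\sqrt{d}}{B\tau\sqrt{T}}$, the $R_{1}$/variance contributions collapse to $\frac{(1+L+L^{2}\beta)\sqrt{d}}{B\tau\sqrt{T}}$, and the local-drift contribution evaluates to $\frac{L^{2}(\tau+1)(2\tau+1)}{6T\tau^{2}}$ via $\sum_{c=0}^{\tau-1}(c+1)^{2} = \tau(\tau+1)(2\tau+1)/6$. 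The main obstacle I expect is coordinating all three sources of slack---server-side error feedback, local-step drift under a biased ternary compressor, and the scalar bias $B$ in $\mathbb{E}[sparsign]$---inside a single telescoping bookkeeping; the deterministic bound $\|sparsign(\cdot)\|_{\infty}\le 1$ is the key ingredient that sidesteps the usual bounded gradient-dissimilarity hypothesis.
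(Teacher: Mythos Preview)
Your proposal is correct and follows essentially the same route as the paper: the same virtual iterate $\tilde w^{(t)}=w^{(t)}-\eta\eta_L\tilde{\boldsymbol e}^{(t)}$ with the recurrence $\tilde w^{(t+1)}=\tilde w^{(t)}-\eta\eta_L p^{(t)}$, the same bound $\mathbb{E}\|\tilde{\boldsymbol e}^{(t)}\|^2\le\beta d$ obtained from the $\alpha$-approximation property and a Young-inequality recursion, the same use of $\mathbb{E}[sparsign(x,B\boldsymbol{1})]=Bx$ to extract the $-B\tau\|\nabla F(w^{(t)})\|^2$ term, and the same deterministic local-drift bound $\|w_m^{(t,c)}-w^{(t)}\|\le\eta_L c\sqrt d$ coming from $\|sparsign(\cdot)\|_\infty\le 1$. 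The only cosmetic difference is that the paper upper-bounds $\sum_{c=0}^{\tau-1}c^2$ by $\tau(\tau+1)(2\tau+1)/6$ directly rather than via your identity $\sum_{c=0}^{\tau-1}(c+1)^2=\tau(\tau+1)(2\tau+1)/6$.
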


Before proving Theorem \ref{SuppEFDPSIGNConvergence2}, we first show the following lemmas.
\begin{Lemma}\label{Recurrence2}
Let $y^{(t)} = w^{(t)} - \eta\eta_{L} \tilde{\boldsymbol{e}}^{(t)}$, we have
\begin{equation}
y^{(t+1)} = y^{(t)} - \eta\eta_{L}\frac{1}{|S^{(t)}|}\sum_{m \in S^{(t)}}\Delta_{m}^{(t)}.
\end{equation}
\end{Lemma}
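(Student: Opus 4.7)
The plan is a direct algebraic verification by substitution, in the spirit of the standard ``virtual sequence'' trick used in analyses of error-compensated SGD (e.g., as in \cite{stich2018sparsified,karimireddy2019error}). The lemma's only content is that the auxiliary iterate $y^{(t)}$, obtained by subtracting the scaled server-side residual $\eta\eta_{L}\tilde{\boldsymbol{e}}^{(t)}$ from $w^{(t)}$, satisfies a clean recursion driven by the \emph{raw} (uncompressed at the server) averaged worker message $\frac{1}{|S^{(t)}|}\sum_{m \in S^{(t)}}\Delta_{m}^{(t)}$, rather than by the biased majority-vote output $\tilde{\boldsymbol{g}}^{(t)} = \mathcal{C}\bigl(\frac{1}{|S^{(t)}|}\sum_{m \in S^{(t)}}\Delta_{m}^{(t)} + \tilde{\boldsymbol{e}}^{(t)}\bigr)$. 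Proving this is essentially a one-line calculation; the real purpose is to record it so that later convergence arguments can work with the simpler recursion for $y^{(t)}$.

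First I would write out $y^{(t+1)} = w^{(t+1)} - \eta\eta_{L}\tilde{\boldsymbol{e}}^{(t+1)}$ using the definition. Then I would plug in the two update rules from Algorithm~\ref{QuantizedSIGNSGDLocal}: the model update $w^{(t+1)} = w^{(t)} - \eta\eta_{L}\tilde{\boldsymbol{g}}^{(t)}$, and the residual update from \eqref{residualupdate}, namely $\tilde{\boldsymbol{e}}^{(t+1)} = \frac{1}{|S^{(t)}|}\sum_{m \in S^{(t)}}\Delta_{m}^{(t)} + \tilde{\boldsymbol{e}}^{(t)} - \tilde{\boldsymbol{g}}^{(t)}$. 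Expanding, the two $\eta\eta_{L}\tilde{\boldsymbol{g}}^{(t)}$ contributions cancel (one from the model step, one from the $-\tilde{\boldsymbol{g}}^{(t)}$ term inside $\tilde{\boldsymbol{e}}^{(t+1)}$), and regrouping what remains gives exactly $w^{(t)} - \eta\eta_{L}\tilde{\boldsymbol{e}}^{(t)} - \eta\eta_{L}\frac{1}{|S^{(t)}|}\sum_{m \in S^{(t)}}\Delta_{m}^{(t)}$, which by definition of $y^{(t)}$ is the claimed right-hand side.

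There is no substantial obstacle here; the only thing to be careful about is keeping the signs straight inside $\tilde{\boldsymbol{e}}^{(t+1)}$ and confirming that the server's compression operator $\mathcal{C}(\cdot)$ never needs to be expanded or estimated in this step, because every occurrence of $\tilde{\boldsymbol{g}}^{(t)}$ disappears through cancellation. Conceptually, the lemma says that all the bias introduced by $\mathcal{C}(\cdot)$ is stored in $\tilde{\boldsymbol{e}}^{(t)}$ and is precisely what the shift from $w^{(t)}$ to $y^{(t)}$ removes, so downstream one can bound $\|\nabla F(w^{(t)})\|^2$ by controlling the evolution of $y^{(t)}$ and separately bounding $\|y^{(t)} - w^{(t)}\|^2 = \eta^2\eta_{L}^2\|\tilde{\boldsymbol{e}}^{(t)}\|^2$.
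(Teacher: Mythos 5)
Your proposal is correct and matches the paper's proof: both are the same one-line substitution of the model update $w^{(t+1)} = w^{(t)} - \eta\eta_{L}\tilde{\boldsymbol{g}}^{(t)}$ and the residual update (\ref{residualupdate}) into the definition of $y^{(t+1)}$, after which the $\tilde{\boldsymbol{g}}^{(t)}$ (equivalently, $\tilde{\boldsymbol{e}}^{(t+1)}$) terms cancel. The observation that the server compressor $\mathcal{C}(\cdot)$ never needs to be expanded is exactly the point of the lemma as used in the paper.
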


\begin{proof}
\begin{equation}
\begin{split}
   y^{(t+1)} &=  w^{(t+1)} - \eta\eta_{L} \tilde{\boldsymbol{e}}^{(t+1)}\\
     &= w^{(t)} - \eta\eta_{L}\tilde{\boldsymbol{g}}^{(t)} - \eta\eta_{L} \tilde{\boldsymbol{e}}^{(t+1)} \\
     &= w^{(t)} - \eta\eta_{L}\bigg(\frac{1}{|S^{(t)}|}\sum_{m \in S^{(t)}}\Delta_{m}^{(t)} + \tilde{\boldsymbol{e}}^{(t)} - \tilde{\boldsymbol{e}}^{(t+1)}\bigg) - \eta\eta_{L} \tilde{\boldsymbol{e}}^{(t+1)} \\
     &= w^{(t)} - \eta\eta_{L}\frac{1}{|S^{(t)}|}\sum_{m \in S^{(t)}}\Delta_{m}^{(t)} - \eta\eta_{L} \tilde{\boldsymbol{e}}^{(t)} \\
     & = y^{(t)} - \eta\eta_{L}\frac{1}{|S^{(t)}|}\sum_{m \in S^{(t)}}\Delta_{m}^{(t)}.
\end{split}
\end{equation}
\end{proof}

\begin{Lemma}\label{bound2}
There exists a positive constant $\beta > 0$ such that
$\mathbb{E}[||\tilde{\boldsymbol{e}}^{(t)}||^2_{2}] \leq \beta d, \forall t$.
\end{Lemma}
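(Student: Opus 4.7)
The plan is to prove the bound recursively. Observe that the residual update \eqref{residualupdate} can be rewritten, using the definition of $\tilde{\boldsymbol{g}}^{(t)}$, as
\begin{equation}\nonumber
\tilde{\boldsymbol{e}}^{(t+1)} = \boldsymbol{x}^{(t)} - \mathcal{C}(\boldsymbol{x}^{(t)}), \qquad \boldsymbol{x}^{(t)} \overset{\text{def}}{=} \frac{1}{|S^{(t)}|}\sum_{m \in S^{(t)}}\Delta_{m}^{(t)}+\tilde{\boldsymbol{e}}^{(t)},
\end{equation}
so the $\alpha$-approximate compressor property immediately gives $\|\tilde{\boldsymbol{e}}^{(t+1)}\|_{2}^{2} \leq (1-\alpha)\|\boldsymbol{x}^{(t)}\|_{2}^{2}$.

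Next, I would expand $\|\boldsymbol{x}^{(t)}\|_{2}^{2}$ using Young's inequality $\|\boldsymbol{a}+\boldsymbol{b}\|_{2}^{2} \leq (1+\gamma)\|\boldsymbol{a}\|_{2}^{2} + (1+\gamma^{-1})\|\boldsymbol{b}\|_{2}^{2}$ with $\boldsymbol{a}=\tilde{\boldsymbol{e}}^{(t)}$ and $\boldsymbol{b} = \frac{1}{|S^{(t)}|}\sum_{m \in S^{(t)}}\Delta_{m}^{(t)}$. The key structural observation is that each $\Delta_{m}^{(t)}$ is produced by $sparsign$ and hence lies in $\{-1,0,1\}^{d}$, so $\|\Delta_{m}^{(t)}\|_{2}^{2}\leq d$ deterministically; by convexity of the squared norm, $\|\frac{1}{|S^{(t)}|}\sum_{m \in S^{(t)}}\Delta_{m}^{(t)}\|_{2}^{2} \leq d$. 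Picking $\gamma$ so that $(1-\alpha)(1+\gamma)<1$ (e.g.\ $\gamma = \alpha/(2(1-\alpha))$, giving contraction factor $1-\alpha/2$) yields a one-step contraction of the form
\begin{equation}\nonumber
\mathbb{E}\!\left[\|\tilde{\boldsymbol{e}}^{(t+1)}\|_{2}^{2}\right] \;\leq\; \left(1-\tfrac{\alpha}{2}\right)\mathbb{E}\!\left[\|\tilde{\boldsymbol{e}}^{(t)}\|_{2}^{2}\right] + C_{\alpha}\, d,
\end{equation}
for an explicit constant $C_{\alpha}$ depending only on $\alpha$.

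Finally, unrolling this linear recursion with $\tilde{\boldsymbol{e}}^{(0)}=\boldsymbol{0}$ and summing the resulting geometric series bounds $\mathbb{E}[\|\tilde{\boldsymbol{e}}^{(t)}\|_{2}^{2}]$ by $(2C_{\alpha}/\alpha)\,d$, so the claim holds with $\beta = 2C_{\alpha}/\alpha$.

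I do not anticipate a major obstacle here: the argument is a standard error-feedback contraction. The only subtlety is ensuring that the crude deterministic bound $\|\Delta_{m}^{(t)}\|_{2}^{2} \leq d$ is enough, which it is precisely because the outer compressor $Q$ in Algorithm \ref{QuantizedSIGNSGDLocal} is $sparsign$ and therefore produces ternary outputs; this would fail if one instead transmitted an un-normalized sum of local $sparsign$ outputs, so one must verify that $\Delta_{m}^{(t)}$ really is the ternary quantity rather than the accumulated pre-compression sum. With that in hand, the rest reduces to Young's inequality plus a geometric series.
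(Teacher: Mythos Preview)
Your proposal is correct and follows essentially the same route as the paper: apply the $\alpha$-approximate compressor property to get $\|\tilde{\boldsymbol{e}}^{(t+1)}\|_2^2 \le (1-\alpha)\|\boldsymbol{x}^{(t)}\|_2^2$, split $\boldsymbol{x}^{(t)}$ via Young's inequality, bound the averaged $\Delta_m^{(t)}$ term by $d$ using the ternary nature of $sparsign$, and unroll the resulting contraction into a geometric series. The paper leaves the Young parameter (its $\rho$) unspecified subject to $(1-\alpha)(1+\rho)<1$, whereas you fix it explicitly; otherwise the arguments coincide.
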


\begin{proof}
Since $\mathcal{C}(\cdot)$ is an $\alpha$-approximate compressor, it can be shown that
\begin{equation}
\begin{split}
\mathbb{E}||\tilde{\boldsymbol{e}}^{(t+1)}||_{2}^{2} &\leq (1-\alpha)\bigg|\bigg|\frac{1}{|S^{(t)}|}\sum_{m \in S^{(t)}}\Delta_{m}^{(t)}+\tilde{\boldsymbol{e}}^{(t)}\bigg|\bigg|_{2}^{2} \\
&\leq (1-\alpha)(1+\rho)\mathbb{E}||\tilde{\boldsymbol{e}}^{(t)}||_2^2 + (1-\alpha)\bigg(1+\frac{1}{\rho}\bigg)\mathbb{E}\bigg|\bigg|\frac{1}{|S^{(t)}|}\sum_{m \in S^{(t)}}\Delta_{m}^{(t)}\bigg|\bigg|_2^2 \\
&\leq \sum_{j=0}^{t}[(1-\alpha)(1+\rho)]^{t-j}(1-\alpha)\bigg(1+\frac{1}{\rho}\bigg)\mathbb{E}\bigg|\bigg|\frac{1}{|S^{(t)}|}\sum_{m \in S^{(j)}}\Delta_{m}^{(j)}\bigg|\bigg|_2^2 \\
&\leq \frac{(1-\alpha)\bigg(1+\frac{1}{\rho}\bigg)}{1-(1-\alpha)(1+\rho)}d,
\end{split}
\end{equation}
where we invoke Young's inequality recurrently and $\rho$ can be any positive constant. Therefore, there exists some constant $\beta > 0$ such that $\mathbb{E}[||\tilde{\boldsymbol{e}}^{(t)}||^2_{2}] \leq \beta d, \forall t$.
\end{proof}

Now, we are ready to prove Theorem \ref{SuppEFDPSIGNConvergence2}.
\begin{proof}
Let $y^{(t)} = w^{(t)} - \eta\eta_{L} \tilde{\boldsymbol{e}}^{(t)}$, according to Lemma \ref{Recurrence2}, we have
\begin{equation}\label{Convergence3}
\begin{split}
\mathbb{E}[F(y^{(t+1)}) - F(y^{(t)})] &\leq -\eta\eta_{L}\mathbb{E}\bigg[\langle\nabla F(y^{(t)}), \frac{1}{|S^{(t)}|}\sum_{m \in S^{(t)}}\Delta_{m}^{(t)}\rangle\bigg] + \frac{L}{2}\mathbb{E}\bigg[\bigg|\bigg|\eta\eta_{L}\frac{1}{|S^{(t)}|}\sum_{m \in S^{(t)}}\Delta_{m}^{(t)}\bigg|\bigg|^2_{2}\bigg] \\
&= \eta\eta_{L}\mathbb{E}\bigg[\langle\nabla F(w^{(t)}) - \nabla F(y^{(t)}), \frac{1}{|S^{(t)}|}\sum_{m \in S^{(t)}}\Delta_{m}^{(t)}\rangle\bigg] + \frac{L\eta^2\eta_{L}^2}{2}\mathbb{E}\bigg[\bigg|\bigg|\frac{1}{|S^{(t)}|}\sum_{m \in S^{(t)}}\Delta_{m}^{(t)}\bigg|\bigg|^2_{2}\bigg] \\
& -  \eta\eta_{L}\mathbb{E}\bigg[\langle\nabla F(w^{(t)}), \frac{1}{|S^{(t)}|}\sum_{m \in S^{(t)}}\Delta_{m}^{(t)}\rangle\bigg].
\end{split}
\end{equation}

The second term can be bounded as follows.
\begin{equation}\label{Convergence_1}
\begin{split}
\frac{L\eta^2\eta_{L}^{2}}{2}\mathbb{E}\bigg[\bigg|\bigg|\frac{1}{|S^{(t)}|}\sum_{m \in S^{(t)}}\Delta_{m}^{(t)}\bigg|\bigg|^2_{2}\bigg] \leq \frac{L\eta^2\eta_{L}^{2}d}{2}.
\end{split}
\end{equation}
We then bound the first term, in particular, we have
\begin{equation}\label{Convergence4}
\begin{split}
\langle\nabla F(w^{(t)})- \nabla F(y^{(t)}), \frac{1}{|S^{(t)}|}\sum_{m \in S^{(t)}}\Delta_{m}^{(t)}\rangle & \leq \frac{\eta\eta_{L}}{2}||\frac{1}{|S^{(t)}|}\sum_{m \in S^{(t)}}\Delta_{m}^{(t)}||^2_{2} + \frac{1}{2\eta\eta_{L}}||\nabla F(w^{(t)})-\nabla F(y^{(t)})||^2_{2} \\
&\leq \frac{\eta\eta_{L}}{2}||\frac{1}{|S^{(t)}|}\sum_{m \in S^{(t)}}\Delta_{m}^{(t)}||^2_{2} + \frac{L^2}{2\eta\eta_{L}}||y^{(t)} - w^{(t)}||^2_{2} \\
&= \frac{\eta\eta_{L}}{2}||\frac{1}{|S^{(t)}|}\sum_{m \in S^{(t)}}\Delta_{m}^{(t)}||^2_{2} + \frac{L^2\eta\eta_{L}}{2}||\tilde{\boldsymbol{e}}^{(t)}||^2_{2} \\
&\leq \frac{\eta\eta_{L}}{2}||\frac{1}{|S^{(t)}|}\sum_{m \in S^{(t)}}\Delta_{m}^{(t)}||^2_{2} + \frac{L^2\eta\eta_{L}}{2}\beta d\\
&\leq \frac{\eta\eta_{L} d}{2} + \frac{L^2\eta\eta_{L} \beta d}{2}
\end{split}
\end{equation}
where the second inequality is due to the $L$-smoothness of $F$.

When $\boldsymbol{B}_{m}^{(t,c)} = B_{l}^{(t)}\cdot\boldsymbol{1}$, $\boldsymbol{B}^{(t)}_{m} = B_{g}^{(t)}\cdot\boldsymbol{1} \forall m$, we can bound the last term as follows.
\begin{equation}\label{Convergence5}
\begin{split}
&-\mathbb{E}\bigg[\langle\nabla F(w^{(t)}), \frac{1}{|S^{(t)}|}\sum_{m \in S^{(t)}}\Delta_{m}^{(t)}\rangle\bigg] \\
&=-\mathbb{E}\bigg[\langle\nabla F(w^{(t)}), \frac{1}{|S^{(t)}|}\sum_{m \in S^{(t)}}Q(\sum_{c=0}^{\tau-1}Q(\boldsymbol{g}_{m}^{(t,c)},\boldsymbol{B}_{m}^{(t,c)}),\boldsymbol{B}^{(t)}_{m})\rangle\bigg] \\
& = -\mathbb{E}\bigg[\langle\nabla F(w^{(t)}), \frac{1}{|S^{(t)}|}\sum_{m \in S^{(t)}}B_{g}^{(t)}\sum_{c=0}^{\tau-1}B_{l}^{(t)}\nabla f_{m}(w^{(t, c)}_{m})\rangle\bigg] \\
& = -\mathbb{E}\bigg[\langle\nabla F(w^{(t)}), \frac{B_{l}^{(t)}B_{g}^{(t)}}{|S^{(t)}|}\sum_{m \in S^{(t)}}\sum_{c=0}^{\tau-1}\nabla f_{m}(w^{(t, c)}_{m})\rangle\bigg] = -\mathbb{E}\bigg[\langle\nabla F(w^{(t)}), B_{l}^{(t)}B_{g}^{(t)}\frac{1}{M}\sum_{m=1}^{M}\sum_{c=0}^{\tau-1}\nabla f_{m}(w^{(t, c)}_{m})\rangle\bigg] \\
& = \mathbb{E}\bigg[\langle\nabla F(w^{(t)}), -B_{l}^{(t)}B_{g}^{(t)}\frac{1}{M}\sum_{m=1}^{M}\sum_{c=0}^{\tau-1}\nabla f_{m}(w^{(t, c)}_{m}) + B_{l}^{(t)}B_{g}^{(t)}\frac{1}{M}\sum_{m=1}^{M}\sum_{c=0}^{\tau-1}\nabla F(w^{(t)}) \\
&- B_{l}^{(t)}B_{g}^{(t)}\frac{1}{M}\sum_{m=1}^{M}\sum_{c=0}^{\tau-1}\nabla F(w^{(t)})\rangle\bigg] \\
& = -B_{l}^{(t)}B_{g}^{(t)}\tau||\nabla F(w^{(t)})||_{2}^{2} + \mathbb{E}\bigg[\langle\nabla F(w^{(t)}), -B_{l}^{(t)}B_{g}^{(t)}\frac{1}{M}\sum_{m=1}^{M}\sum_{c=0}^{\tau-1}\nabla f_{m}(w^{(t, c)}_{m}) + B_{l}^{(t)}B_{g}^{(t)}\frac{1}{M}\sum_{m=1}^{M}\sum_{c=0}^{\tau-1}\nabla F(w^{(t)}) \rangle\bigg].
\end{split}
\end{equation}

\begin{equation}\label{lastterm}
\begin{split}
&\mathbb{E}\bigg[\langle\nabla F(w^{(t)}), -B_{l}^{(t)}B_{g}^{(t)}\frac{1}{M}\sum_{m=1}^{M}\sum_{c=0}^{\tau-1}\nabla f_{m}(w^{(t, c)}_{m}) + B_{l}^{(t)}B_{g}^{(t)}\frac{1}{M}\sum_{m=1}^{M}\sum_{c=0}^{\tau-1}\nabla F(w^{(t)}) \rangle\bigg] \\
& = B_{l}^{(t)}B_{g}^{(t)}\langle\nabla F(w^{(t)}), \mathbb{E}\bigg[\frac{1}{M}\sum_{m=1}^{M}\sum_{c=0}^{\tau-1}-\nabla f_{m}(w^{(t, c)}_{m})+\frac{1}{M}\sum_{m=1}^{M}\sum_{c=0}^{\tau-1}\nabla f_{m}(w^{(t)})\bigg]\rangle\\
&=B_{l}^{(t)}B_{g}^{(t)}\langle\sqrt{\tau}\nabla F(w^{(t)}),\frac{1}{\sqrt{\tau}} \mathbb{E}\bigg[\frac{1}{M}\sum_{m=1}^{M}\sum_{c=0}^{\tau-1}-\nabla f_{m}(w^{(t, c)}_{m})+\frac{1}{M}\sum_{m=1}^{M}\sum_{c=0}^{\tau-1}\nabla f_{m}(w^{(t)})\bigg]\rangle \\
&=\frac{B_{l}^{(t)}B_{g}^{(t)}}{2}\bigg[||\sqrt{\tau}\nabla F(w^{(t)})||^2 + \mathbb{E}\bigg|\bigg|\frac{1}{\sqrt{\tau}} \bigg[\frac{1}{M}\sum_{m=1}^{M}\sum_{c=0}^{\tau-1}-\nabla f_{m}(w^{(t, c)}_{m})+\frac{1}{M}\sum_{m=1}^{M}\sum_{c=0}^{\tau-1}\nabla f_{m}(w^{(t)})\bigg]\bigg|\bigg|^2 \\
&- \mathbb{E}\bigg|\bigg|\frac{1}{\sqrt{\tau}} \bigg[\frac{1}{M}\sum_{m=1}^{M}\sum_{c=0}^{\tau-1}-\nabla f_{m}(w^{(t, c)}_{m})\bigg]\bigg|\bigg|^2\bigg] \\
&\leq \frac{B_{l}^{(t)}B_{g}^{(t)}}{2}\bigg[||\sqrt{\tau}\nabla F(w^{(t)})||^2 + \mathbb{E}\bigg|\bigg|\frac{1}{\sqrt{\tau}} \bigg[\frac{1}{M}\sum_{m=1}^{M}\sum_{c=0}^{\tau-1}-\nabla f_{m}(w^{(t, c)}_{m})+\frac{1}{M}\sum_{m=1}^{M}\sum_{c=0}^{\tau-1}\nabla f_{m}(w^{(t)})\bigg]\bigg|\bigg|\bigg] \\
&\leq \frac{B_{l}^{(t)}B_{g}^{(t)}\tau}{2}||\nabla F(w^{(t)})||^2 + \frac{B_{l}^{(t)}B_{g}^{(t)}}{2}\mathbb{E}\bigg|\bigg|\frac{1}{\sqrt{\tau}} \bigg[\frac{1}{M}\sum_{m=1}^{M}\sum_{c=0}^{\tau-1}(\nabla f_{m}(w^{(t)})-\nabla f_{m}(w^{(t, c)}_{m}))\bigg]\bigg|\bigg|^2\\
&= \frac{B_{l}^{(t)}B_{g}^{(t)}\tau}{2}||\nabla F(w^{(t)})||^2 + \frac{B_{l}^{(t)}B_{g}^{(t)}}{2M^2\tau}\mathbb{E}\bigg|\bigg| \bigg[\sum_{m=1}^{M}\sum_{c=0}^{\tau-1}(\nabla f_{m}(w^{(t)})-\nabla f_{m}(w^{(t, c)}_{m}))\bigg]\bigg|\bigg|^2 \\
&\leq \frac{B_{l}^{(t)}B_{g}^{(t)}\tau}{2}||\nabla F(w^{(t)})||^2 + \frac{B_{l}^{(t)}B_{g}^{(t)}}{2M}\sum_{m=1}^{M}\sum_{c=0}^{\tau-1}\mathbb{E}|| \nabla f_{m}(w^{(t)})-\nabla f_{m}(w^{(t, c)}_{m})||^2 \\
&\leq \frac{B_{l}^{(t)}B_{g}^{(t)}\tau}{2}||\nabla F(w^{(t)})||^2 + \frac{B_{l}^{(t)}B_{g}^{(t)}L^2}{2M }\sum_{m=1}^{M}\sum_{c=0}^{\tau-1}\mathbb{E}||w^{(t)}-w^{(t, c)}_{m}||^2.
\end{split}
\end{equation}

In addition, we have
\begin{equation}\label{localdistance}
\begin{split}
\mathbb{E}||w^{(t)}-w^{(t, c)}_{m}||^2 &= \mathbb{E}\bigg[\bigg|\bigg|w^{(t)}-(w^{(t)} - \sum_{j=0}^{c-1}\eta_{L}Q(\boldsymbol{g}_{m}^{(t,j)},B_{m}^{(t,j)}))\bigg|\bigg|^2\bigg] = \mathbb{E}\bigg[\bigg|\bigg| \sum_{j=0}^{c-1}\eta_{L}Q(\boldsymbol{g}_{m}^{(t,j)},B_{m}^{(t,j)})\bigg|\bigg|^2\bigg] \leq \eta_{L}^2c^2d.
\end{split}
\end{equation}

Plugging (\ref{lastterm}) and (\ref{localdistance}) into (\ref{Convergence5}) yields
\begin{equation}\label{Convergence10}
\begin{split}
&-\mathbb{E}\bigg[\langle\nabla F(w^{(t)}), \frac{1}{|S^{(t)}|}\sum_{m \in S^{(t)}}\Delta_{m}^{(t)}\rangle\bigg] \\
& = -B_{l}^{(t)}B_{g}^{(t)}\tau||\nabla F(w^{(t)})||_{2}^{2} + \mathbb{E}\bigg[\langle\nabla F(w^{(t)}), -B_{l}^{(t)}B_{g}^{(t)}\frac{1}{M}\sum_{m=1}^{M}\sum_{c=0}^{\tau-1}\nabla f_{m}(w^{(t, c)}_{m}) \\
&+ B_{l}^{(t)}B_{g}^{(t)}\frac{1}{M}\sum_{m=1}^{M}\sum_{c=0}^{\tau-1}\nabla F(w^{(t)}) \rangle\bigg]\\
&\leq-B_{l}^{(t)}B_{g}^{(t)}\tau||\nabla F(w^{(t)})||_{2}^{2} + \frac{B_{l}^{(t)}B_{g}^{(t)}\tau}{2}||\nabla F(w^{(t)})||^2 + \frac{B_{l}^{(t)}B_{g}^{(t)}L^2}{2M}\sum_{m=1}^{M}\sum_{c=0}^{\tau-1}\eta_{L}^2c^2d\\
&=-\frac{B_{l}^{(t)}B_{g}^{(t)}\tau}{2}||\nabla F(w^{(t)})||^2 + \frac{B_{l}^{(t)}B_{g}^{(t)}L^2\eta_{L}^2d}{2M}\sum_{m=1}^{M}\sum_{c=0}^{\tau-1}c^2\\
&=-\frac{B_{l}^{(t)}B_{g}^{(t)}\tau}{2}||\nabla F(w^{(t)})||^2 + \frac{B_{l}^{(t)}B_{g}^{(t)}L^2\eta_{L}^2d}{2}\frac{\tau(\tau+1)(2\tau+1)}{6}\\
\end{split}
\end{equation}

Plugging (\ref{Convergence_1}), (\ref{Convergence4}) and (\ref{Convergence10}) into (\ref{Convergence3}) yields

\begin{equation}\label{Convergence6}
\begin{split}
&\mathbb{E}[F(y^{(t+1)}) - F(y^{(t)})] \\
&\leq \frac{\eta^2\eta_{L}^{2} d}{2} + \frac{L^2\eta^2\eta_{L}^{2} \beta d}{2} + \frac{L\eta^2\eta_{L}^{2} d}{2} -\frac{B_{l}^{(t)}B_{g}^{(t)}\tau\eta\eta_{L}}{2}||\nabla F(w^{(t)})||^2 + \frac{B_{l}^{(t)}B_{g}^{(t)}L^2\eta\eta_{L}^3d}{2}\frac{\tau(\tau+1)(2\tau+1)}{6}.
\end{split}
\end{equation}

Rewriting (\ref{Convergence6}) and taking average over $t=0,1,2,\cdots,T-1$ on both sides yields
\begin{equation}
\begin{split}
&\frac{1}{T}\sum_{t=0}^{T-1}||\nabla F(w^{(t)})||^2 \leq \sum_{t=0}^{T-1}\frac{2\mathbb{E}[F(y^{(t)}) - F(y^{(t+1)})]}{B_{l}^{(t)}B_{g}^{(t)}\tau\eta\eta_{L} T} + \frac{(\eta\eta_{L}+L\eta\eta_{L}+L^2\eta\eta_{L}\beta)d}{B_{l}^{(t)}B_{g}^{(t)}\tau} + \frac{L^2\eta_{L}^2d(\tau+1)(2\tau+1)}{6}.\\
\end{split}
\end{equation}

Taking $\eta_{L} = \frac{1}{\sqrt{Td}\tau}$, $\eta = \tau$ and $w^{(0)}=y^{(0)}$ and $B_{l}^{(t)}B_{g}^{(t)} = B, \forall t$ yields
\begin{equation}
\begin{split}
\frac{1}{T}\sum_{t=0}^{T-1}||\nabla F(w^{(t)})||^2 \leq \frac{(F(w^{(0)})-F^{*})\sqrt{d}}{B\tau\sqrt{T}} + \frac{(1+L+L^2\beta)\sqrt{d}}{B\tau\sqrt{T}}+ \frac{L^2(\tau+1)(2\tau+1)}{6T\tau^2}.
\end{split}
\end{equation}
\end{proof}
\begin{Remark}
Despite not requiring the bounded gradient dissimilarity assumption, another assumption that we implicitly make is $|\boldsymbol{g}_{m,i}^{(t)}| \leq \frac{1}{B_{l}^{(t)}}, \forall m,i,t$, as in Definition \ref{definitioncompressor}. This can be easily satisfied given the bounded gradient assumption, which is commonly adopted in the literature (e.g., \cite{tang2019doublesqueeze,karimireddy2019error,chen2019distributed,zheng2019communication}) that aims to address the non-convergence issue of {\scriptsize SIGN}SGD. More specifically, Theorem \ref{EFDPSIGNConvergence2} suggests that when $\lim_{T\rightarrow\infty}\frac{1}{B\sqrt{T}} = 0$, the convergence of Algorithm \ref{QuantizedSIGNSGDLocal} is guaranteed. For any fixed $B_{g}^{(t)}$, if $\lim_{T\rightarrow\infty}\frac{|\boldsymbol{g}_{m,i}^{(t)}|}{\sqrt{T}} = 0$, we can find some $B_{l}^{(t)}$ such that $|\boldsymbol{g}_{m,i}^{(t)}| \leq \frac{1}{B_{l}^{(t)}}$ and $\lim_{T\rightarrow\infty}\frac{1}{B\sqrt{T}} = 0$ are satisfied simultaneously. That being said, as long as $|\boldsymbol{g}_{m,i}^{(t)}| \leq T^{a}, \forall m,i,t$, for some $a < \frac{1}{2}$, the convergence is guaranteed by setting $B_{l}^{(t)} = T^{-a}$. In this case, however, the algorithm converges with a slower rate of $O(\frac{1}{T^{(\frac{1}{2}-a)}})$ with respect to the number of communication rounds. On the other hand, as we discussed in Definition \ref{definitioncompressor}, the sparsity of the compressor is proportional to $B$. As an example, if we use $log_{2}(d)$ bits to encode the positions of the non-zero entries of the compressed gradients, the communication overhead for each communication round is proportional to $B$ as well. Therefore, the convergence rate with respect to the communication overhead remains the same.
\end{Remark}
\section{Details about the Baselines}\label{DetailBaselines}
We compare the proposed methods with the following baselines.
\begin{itemize}
    \item {\scriptsize SIGN}SGD \cite{bernstein2018signsgd1}: each worker adopts the $sign$ compressor and transmits the signs of the gradients.
    \item {Scaled {\scriptsize SIGN}SGD} \cite{karimireddy2019error}: each worker $m$ transmits $\frac{||\boldsymbol{g}_{m}^{(t)}||_{1}}{d}sign(\boldsymbol{g}_{m}^{(t)})$ to the parameter server, in which $\boldsymbol{g}_{m}^{(t)}$ is the local stochastic gradient.
    \item {Noisy {\scriptsize SIGN}SGD} \cite{chen2019distributed}: each worker $m$ first adds a zero-mean Gaussian noise $\boldsymbol{n}$ to the gradients, and then applies the $sign$ compressor. More specifically, it transmits $sign(\boldsymbol{g}_{m}^{(t)}+\boldsymbol{n})$. In our experiments, we tune the variance of the noise from the set $\{0.001,0.01,0.1,1.0\}$ and present the best results.
    \item QSGD \cite{alistarh2017qsgd}: For the gradient $\boldsymbol{g}_{m}^{(t)}$, worker $m$ compresses it to
    \begin{equation}
        Q_{s}(\boldsymbol{g}_{m}^{(t)},s) = ||\boldsymbol{g}_{m}^{(t)}||_{2}sign(\boldsymbol{g}_{m}^{(t)})\xi(\boldsymbol{g}_{m}^{(t)},s),
    \end{equation}
    in which $s$ is the quantization level and $\xi(\boldsymbol{g}_{m}^{(t)},s)$ is defined as follows.
    \begin{equation}
        \xi(\boldsymbol{g}_{m,i}^{(t)},s) =
        \begin{cases}
        \frac{l}{s}, \hfill ~~~\text{w.p. $1-\frac{|\boldsymbol{g}_{m,i}^{(t)}|s}{||\boldsymbol{g}_{m,i}^{(t)}||_2}+l$}, \\
        \frac{l+1}{s}, \hfill ~~~\text{otherwise},
        \end{cases}
    \end{equation}
    in which $0 \leq l < s$ is some integer such that $\frac{|\boldsymbol{g}_{m,i}^{(t)}|}{||\boldsymbol{g}_{m,i}^{(t)}||_2} \in [\frac{l}{s},\frac{l+1}{s}]$.
    \item 1-bit $L_2$ norm QSGD: we set $s=1$ in QSGD.
    \item 1-bit $L_{\infty}$ norm QSGD, we set $s=1$ and replace $||\boldsymbol{g}_{m,i}^{(t)}||_2$ with $||\boldsymbol{g}_{m,i}^{(t)}||_{\infty}$ in QSGD.
    \item TernGrad \cite{wen2017terngrad}: For the gradient $\boldsymbol{g}_{m}^{(t)}$, worker $m$ compresses it to
    \begin{equation}
        ternarize(\boldsymbol{g}_{m}^{(t)}) = s_{t}sign(\boldsymbol{g}_{m}^{(t)})\xi(\boldsymbol{g}_{m}^{(t)},s_{t}),
    \end{equation}
    in which $s_{t} = \max_{m}||\boldsymbol{g}_{m}^{(t)}||_{\infty}$ and $\xi(\boldsymbol{g}_{m}^{(t)},s_{t})$ is defined as follows.
    \begin{equation}
        \xi(\boldsymbol{g}_{m,i}^{(t)},s_{t}) =
        \begin{cases}
        1, \hfill ~~~\text{w.p. $\frac{|\boldsymbol{g}_{m,i}^{(t)}|s}{s_{t}}$}, \\
        0, \hfill ~~~\text{otherwise}.
        \end{cases}
    \end{equation}
\end{itemize}

\section{Details of the Implementation}\label{DetailsImplementation}
Our experiments are mainly implemented using Python 3.8 with packages tensorflow 2.4.1 and numpy 1.19.2. Two Intel(R) Xeon(R) Platinum 8280 CPUs and 8 Tesla V100 GPUs are used in the experiments.
\subsection{Dataset and Pre-processing}
We perform experiments on the standard MNIST dataset, the CIFAR-10 dataset and the CIFAR-100 dataset. MNIST is for handwritten digit recognition consisting of 60,000 training samples and 10,000 testing samples. Each sample is a 28$\times$28 size gray-level image. We normalize the data by dividing it with the max RGB value (i.e., 255.0). The  CIFAR-10 dataset contains 50,000 training samples and 10,000 testing samples. Each sample is a 32$\times$32 color image. The CIFAR-100 dataset is similar to CIFAR-10, but with 100 classes. The data are normalized with zeor-centered mean.
32x32 color images.


\subsection{Neural Network Setting}
For Fashion-MNIST, we implement a three-layer fully connected neural network with softmax of classes with cross-entropy loss. The two hidden layers has 256 and 128 hidden ReLU units, respectively. For CIFAR-10, we implement VGG-9 with 7 convolution layers. It has two contiguous blocks of two convolution layers with 64 and 128 channels, respectively, followed by a max-pooling, then it has one blocks of three convolution layers with 256 channels followed by max-pooling, and at last, we have one dense layer with 512 hidden units. For CIFAR-100, we implement VGG-11 with 8 convolution layers. It has four contiguous blocks of two convolution layers with 64, 128, 256 and 512 channels, respectively, followed by a max-pooling, then it has two dense layers with 1024 hidden units.


\section{Results on CIFAR-100}\label{AdditionalResults}
 In this section, we present the results on the CIFAR-100 dataset. we tune the initial learning rate from the set $\{0.0001, 0.001, 0.01, 0.1, 1.0\}$, which is reduced by factors of 2, 5, and 10 at communication round 1,000, 3,000, and 4,500, respectively. We consider a scenario of $M=100$ normal workers and examine the scenarios of $\alpha \in \{0.1,0.3,0.6,1.0\}$. It can be observed from Table \ref{table_cifar100_local1}-\ref{table_cifar100_local4} that {\scriptsize EF-SPARSIGN}SGD outperforms FedCom in all the examined scenarios, which validates the effectiveness of the proposed method.

\begin{table*}[h]
\caption{Learning Performance on CIFAR-100 ($\alpha = 0.1$)}
\label{table_cifar100_local1}
\begin{center}
\begin{sc}
\begin{tabular}{cccccc}
\toprule
Algorithm & \makecell{Final Accuracy} & \makecell{Communications Rounds \\to Achieve 40\%} & \makecell{Communication Overhead \\to Achieve 40\% (bits)}\\
\midrule
\makecell{FedCom-Local5} & 39.76$\pm$0.27\%& N.A. & N.A.\\
\makecell{FedCom-Local10} & 40.45$\pm$0.16\%& 4,200 & $1.76\times10^{10}$\\
\makecell{FedCom-Local20} & 40.65$\pm$0.67\%& 4,225 & $1.77\times10^{10}$\\
\makecell{{\scriptsize EF-SPARSIGN}SGD-Local5} & 43.13$\pm$0.51\%& 1,350 & $1.71\times10^{9}$\\
\makecell{{\scriptsize EF-SPARSIGN}SGD-Local10} & 46.65$\pm$0.43\%& 1,125 & $1.52\times10^{9}$\\
\makecell{{\scriptsize EF-SPARSIGN}SGD-Local20} & 46.83$\pm$0.43\%& 1,300 & $1.41\times10^{9}$\\
\bottomrule
\end{tabular}
\end{sc}
\end{center}
\end{table*}

\begin{table*}
\caption{Learning Performance on CIFAR-100 ($\alpha = 0.3$)}
\label{table_cifar100_local2}
\begin{center}
\begin{sc}
\begin{tabular}{cccccc}
\toprule
Algorithm & \makecell{Final Accuracy} & \makecell{Communications Rounds \\to Achieve 40\%} & \makecell{Communication Overhead \\to Achieve 40\% (bits)}\\
\midrule
\makecell{FedCom-Local5} & 42.50$\pm$0.63\%& 2,050 & $8.58\times10^{9}$\\
\makecell{FedCom-Local10} & 42.35$\pm$0.29\%& 1,325 & $5.55\times10^{9}$\\
\makecell{FedCom-Local20} & 42.41$\pm$0.57\%& 1,400 & $5.86\times10^{9}$\\
\makecell{{\scriptsize EF-SPARSIGN}SGD-Local5} & 51.66$\pm$0.52\%& 1,025 & $1.38\times10^{9}$\\
\makecell{{\scriptsize EF-SPARSIGN}SGD-Local10} & 52.37$\pm$0.31\%& 825 & $1.12\times10^{9}$\\
\makecell{{\scriptsize EF-SPARSIGN}SGD-Local20} & 52.16$\pm$0.30\%& 925 & $9.71\times10^{8}$\\
\bottomrule
\end{tabular}
\end{sc}
\end{center}
\end{table*}

\begin{table*}
\caption{Learning Performance on CIFAR-100 ($\alpha = 0.6$)}
\label{table_cifar100_local3}
\begin{center}
\begin{sc}
\begin{tabular}{cccccc}
\toprule
Algorithm & \makecell{Final Accuracy} & \makecell{Communications Rounds \\to Achieve 40\%} & \makecell{Communication Overhead \\to Achieve 40\% (bits)}\\
\midrule
\makecell{FedCom-Local5} & 42.57$\pm$0.24\%& 1,775 & $7.43\times10^{9}$\\
\makecell{FedCom-Local10} & 42.71$\pm$0.37\%& 1,050 & $4.40\times10^{9}$\\
\makecell{FedCom-Local20} & 43.62$\pm$0.39\%& 1,025 & $4.29\times10^{9}$\\
\makecell{{\scriptsize EF-SPARSIGN}SGD-Local5} & 51.36$\pm$0.20\%& 1,025 & $1.33\times10^{9}$\\
\makecell{{\scriptsize EF-SPARSIGN}SGD-Local10} & 52.59$\pm$0.06\%& 875 & $1.15\times10^{9}$\\
\makecell{{\scriptsize EF-SPARSIGN}SGD-Local20} & 51.41$\pm$0.17\%& 1,025 & $1.07\times10^{9}$\\
\bottomrule
\end{tabular}
\end{sc}
\end{center}
\end{table*}

\begin{table*}
\caption{Learning Performance on CIFAR-100 ($\alpha = 1.0$)}
\label{table_cifar100_local4}
\begin{center}
\begin{sc}
\begin{tabular}{cccccc}
\toprule
Algorithm & \makecell{Final Accuracy} & \makecell{Communications Rounds \\to Achieve 40\%} & \makecell{Communication Overhead \\to Achieve 40\% (bits)}\\
\midrule
\makecell{FedCom-Local5} & 40.90$\pm$0.97\%& 3,075 & $1.29\times10^{10}$\\
\makecell{FedCom-Local10} & 42.40$\pm$0.31\%& 1,075 & $4.50\times10^{9}$\\
\makecell{FedCom-Local20} & 42.59$\pm$0.65\%& 1,025 & $4.29\times10^{9}$\\
\makecell{{\scriptsize EF-SPARSIGN}SGD-Local5} & 51.01$\pm$0.16\%& 1,025 & $1.33\times10^{9}$\\
\makecell{{\scriptsize EF-SPARSIGN}SGD-Local10} & 52.17$\pm$0.22\%& 875 & $1.10\times10^{9}$\\
\makecell{{\scriptsize EF-SPARSIGN}SGD-Local20} & 51.37$\pm$0.29\%& 1,025 & $1.01\times10^{9}$\\
\bottomrule
\end{tabular}
\end{sc}
\end{center}
\end{table*}

\end{document}